\newcommand{\recoverproperty}{locally symmetric actions}
\begin{document}

\title{When Simple Exploration is Sample Efficient: Identifying Sufficient Conditions for Random Exploration to Yield PAC RL Algorithms}

\author{\name Yao Liu 
\email yaoliu@stanford.edu \\
\addr Stanford University 
\AND
Emma Brunskill 
\email ebrun@cs.stanford.edu\\
\addr Stanford University 
}

%\author{Anonymous Authors}
%\editor{}

\maketitle

\begin{abstract}
Efficient exploration is one of the key challenges for reinforcement learning (RL) algorithms. Most traditional sample efficiency bounds require strategic exploration. Recently many deep RL algorithms with simple heuristic exploration strategies that have few formal guarantees, achieve surprising success in many domains. These results pose an important question about understanding these exploration strategies such as $e$-greedy, as well as understanding what characterize the difficulty of exploration in MDPs. In this work we propose problem specific sample complexity bounds of $Q$ learning with random walk exploration that rely on several structural properties. We also link our theoretical results to some empirical benchmark domains, to illustrate if our bound gives polynomial sample complexity in these domains and how that is related with the empirical performance.
\end{abstract}

\begin{keywords}
Reinforcement learning, Markov decision process, sample complexity of exploration
\end{keywords}

\section{Introduction}
An important challenge for reinforcement learning is to balance exploration and exploitation. There have been many strategic exploration algorithms \citep{auer2007logarithmic,strehl2012incremental,dann2015sample}, yet many of the recent successes in deep reinforcement learning rely on algorithms with simple exploration mechanisms. While some of these approaches also require many samples, this still highlights an important question: when is exploration easy? In particular, we consider when a simple approach of random exploration followed by greedy exploitation can enable a strong efficiency criteria, Probably Approximately Correct (PAC): that on all but a number of sample that scales as a polynomial function of the domain, the algorithm will take near-optimal actions. Random exploration followed by greedy exploitation approach is related to popular $e$-greedy methods: it can be viewed as a particular thresholding decay schedule in $e$-greedy methods: $e$ is initially set to 1, and then dropped to 0 after a fixed number of steps. This simplification enables us to focus on when random exploration can still be efficient, and there are many domains where having a fixed budget for exploration is reasonable where our analysis will directly apply. Most prior work on formal analysis of exploration before exploitation approach \citep{langford2008epoch,kearns2002near} focused on strategic exploration during the exploration phase. In contrast, to our knowledge our work is the first to consider under what conditions random action selection during the exploration phase might still be sufficient to enable provably sample efficient reinforcement learning.

Some restrictions on the decision process are needed: there exist challenging Markov decision processes where relying on random exploration will require an exponential bound (in the MDP parameters) on the sample complexity, in contrast to the polynomial dependence required for the algorithm to be PAC. In some such domains, like the combination lock setting\citep{li2012sample,whitehead2014complexity}), any greedy actions will (for a very long time) cause the agent to undo productive exploration towards finding the optimal policy, and therefore $\epsilon$-greedy (for any $\epsilon$) will be no better and likely worse than random exploration, and therefore will also not have PAC performance. 

Rather than focusing on new algorithmic contributions, in this paper we seek to explore sufficient conditions on the domains that ensure that random exploration then exploitation methods will quickly lead to high performance, as formalized by satisfying the PAC criteria. Our work is related to recent work \citep{jiang2016structural} which considered structural properties of Markov decision processes that bound the loss when performing shallow \textit{planning}: in contrast to their work, our work focused on the structural properties of MDPs that enable simple exploration to quickly enable good performance during \textit{learning}. 

%In general there exist challenging MDPs where random exploration or $e$-greedy will require an exponential bound on the sample complexity, such as the combination lock setting \citep{li2012sample,whitehead2014complexity}. Rather than focusing on new algorithmic contributions, in this paper we seek to explore sufficient conditions on the domains that ensure that random exploration then exploitation methods will quickly lead to high performance. 

As our main contribution, we introduce new structural properties of MDPs, and prove that when these parameters scales with a polynomial function of the domain parameters, then a random explore then exploit approach is PAC. Our key properties are $\phi(s)$, a state’s stationary occupancy distribution under random walk, and eigenvalues of a graph Laplacian. Though making an assumption of the occupancy distribution under a random walk might seem to be presuming the conclusion, we note that this assumption only applies to the asymptotic, stationary distribution but our result yields finite sample bounds. Our result relies on some key results about convergence of a lazy random walk on directed graph in \citet{chung2005laplacians}. We also show that if a domain exhibits a property we term \textit{\recoverproperty} then it immediately satisfies the desired stationary criteria. That basically means for any two states there is a symmetric bijection between actions leading to the other state. A number of common simulation domains or slight variants of, including grid worlds, 4 rooms, and Taxi, satisfy this criteria. Following from this property, our work also yields some insights into why certain popular Atari domains have been observed to be feasible with simple e-greedy exploration. Some conditions that are known to enable efficient exploration under more strategic exploration algorithms, such as finite diameter domains, are not sufficient for a random exploration then exploit algorithm to be PAC, and we frame a classic domain, chain, as such an example. Our results also illustrate the difficulty of other similar ``trapdoor'' domains, including Montezuma’s Revenge which has been notoriously challenging for many deep RL agents. We also discuss several other properties that have been proposed to help characterize the learning complexity of MDPs and their relation to our proposed criteria.

To summarize, our results help to characterize the properties of an environment that make exploration hard or easy, a critical problem in RL. We hope these properties might help guide practitioners in their algorithm selection, and also advance our understanding about whether and when strategic exploration is needed. 

%\emma{I think we should frame early how this is related and distinct from prior work. I've moved the related work up, but it now needs to be remolded given it will come before our results}
\section{Related Works}
The optimality of the greedy policy in various settings has been previously studied for significantly more restricted settings. \citet{bastani2017exploiting} prove that a greedy policy can achieve the optimal asymptotic regret for a two-armed contextual bandit, which could be viewed as a special case of episodic reinforcement learning, as long as the contexts are i.i.d. and the distribution of contexts are diverse enough. That implies a case in contextual bandit where the greedy strategy is enough to solve the exploration problem. \citet{karush1967optimal} shows that under MDP structures, a greedy strategy is optimal, eliminating the need to plan ahead. Our work focuses on the random walk side of explore-greedy and yields a polynomial sample complexity bound under more mild assumptions.

Similarly, if the $Q$-functions are initialized extremely optimistically, $O(\frac{V_{\max}}{\Pi_{i=1}^T(1-\alpha_i)})$ where $\alpha_i$ is learning rate and $T$ is the samples we need to learn a near optimal $Q$ function, then greedy-only $Q$-learning is PAC \citep{even2002convergence}.
However, such a high optimism value (far higher than the possible achieve value) will result in an extremely aggressive exploration, further amplifying the problem of theoretically-motivated optimistic approaches in practice.

\citet{maillard2014hard} propose a notion of hardness for MDPs named as \textit{environmental norm}. It measures how varied the value function is at the possible next states and on the distribution over next states. They show how this property provides a tighter regret bound for UCRL algorithm. In the settings we consider random walk exploration is not driven by any reward/value observation, but purely depends on transition dynamics. Thus in this work we mainly consider transition-only parameters. In addition, in contrast to their work, we are focused on how structural properties of the MDP enable explore-greedy to be efficient, rather than improving the analysis of strategic exploration algorithms. 

Our proposed properties, stationary distribution and Laplacian eigenvalues, are related to a couple of other domain properties that have been previously considered. The first is diameter. Finite diameter is assumed for several strategic exploration algorithms such as optimism under uncertainty approaches \citep{jaksch2010near} and PAC analysis \citep{brunskill2013sample}. However, in the context of simple random exploration, a diameter that is polynomial with the MDP parameters is necessary but not sufficient. This is illustrated later in our chain example in which the diameter is finite, because there does exist a policy that could traverse between the start and end state in time linear in the state space, but under random walk the number of samples needed to be likely to reach a later state scales exponentially with later states. Our bound use stationary distribution to measure the asymptotic occupancy instead of direct reachability, which is measured by diameter. The second is proto-value functions \citet{mahadevan2007proto}, which use spectral properties of MDP to design a representation-based policy learning algorithm. Mixing time for MDPs is also a property that is closely related with stationary distribution and our bounds. Previous work about mixing time in MDPs \citep{kearns2002near, brafman2002r} aims at designing strategic exploration algorithm and bounding the complexity of it by mixing time. Mixing time for MDPs \citep{kearns2002near} is a property that is closely related with our bound. Previous work about mixing time in MDPs \cite{kearns2002near, brafman2002r} aim at designing strategic exploration algorithm and bounding the complexity of it by mixing time. Our bound focus on how the simple exploration method works, and we bound this variant of mixing time by other basic parameters as well as stationary distribution and eigenvalues. Our work is also related to classic results about cover time in Markov chains. Some bounds \citep{levin2017markov, ding2011cover} on $\epsilon$-mixing time and relaxation time can also induce a bound on cover time by stationary distribution and Laplacian eigenvalues, but they all focus on reversible chains, which our Theorem \ref{thm:laplacian} does not need.

%\emma{I think we need to say whether this is a trivial or nontrivial extension. I'm not sure if this paragraph should come here or immediately after our thm-- what do you think?}

%% === Section: Problem Settings === 
\section{Preliminaries}
An MDP is a tuple $M=\{\mathcal{S},\mathcal{A},P,R,\gamma\}$, where $\mathcal{S}$ is the state space, $\mathcal{A}$ is the action space, $P:\mathcal{S} \times \mathcal{A} \times \mathcal{S} \mapsto [0,1]$ is the probabilistic transition function, and $R: \mathcal{S} \times \mathcal{A} \mapsto [0,R_{\max}]$ is the reward function. We use $S$ and $A$ to denote the size of $\mathcal{S}$ and $\mathcal{A}$. The value $V^\pi(s)$ defines a discounted expected reward of running policy $\pi$ beginning with state $s$. Sample complexity \citep{kakade2003sample}, a way to quantify the performance of a reinforcement-learning algorithm, is defined as the total number of steps where algorithm execute a sub-optimal policy \textit{i.e.} $V^{*}(s) - V^\pi(s) > \epsilon$. An algorithm is PAC-MDP if its sample complexity is bounded by a polynomial function about $S$, $A$, $\frac{1}{\epsilon}$, $\frac{1}{\delta}$, and $\frac{1}{1-\gamma}$ with high probability.

Previous work \citep{even2003learning} that studies the polynomial convergence time of $Q$ learning by viewing exploration strategy as a black box. They characterize the efficiency of exploration by covering length and bound the convergence time by it.
\begin{definition}
The covering length, denoted by $L$, is the number of time steps we need to visit all state-action pairs at least once with probability at least $1/2$, starting from any $(s,a)$.
\end{definition}
\begin{theorem}
\label{thm:qlearning}
(Theorem 4 from \citet{even2003learning}) Let $Q_T$ be the value function after $T$ step $Q$ learning update, with learning rate $\alpha_t(s,a) = 1/(\#(s,a))^\omega$. $L$ is the covering length of the exploration policy. Then with probability at least $1 - \delta$, $\|Q_T - Q^{*}\|_\infty \le \epsilon$ if:
\begin{equation*}
T \ge T_0 = \widetilde{\Theta} \left( \left( L^{1+3\omega}V_{\max}^2 /((1-\gamma)\epsilon)^2\right)^{\frac{1}{\omega}}  + \left( L/(1-\gamma) \right)^{\frac{1}{1-\omega}} \right),
\end{equation*}
\end{theorem}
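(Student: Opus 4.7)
My plan is to prove the theorem by a phased stochastic-approximation argument, viewing each $Q$-learning update as a noisy contraction of the Bellman operator and chaining contractions across epochs long enough to guarantee every $(s,a)$ is updated. Setting $\Delta_t = Q_t - Q^*$, I would first rewrite the update as
\begin{equation*}
\Delta_{t+1}(s,a) = (1-\alpha_t)\Delta_t(s,a) + \alpha_t\gamma\bigl[\max_{a'}Q_t(s',a') - \mathbb{E}_{s'}\max_{a'}Q^*(s',a')\bigr],
\end{equation*}
and split the bracketed term into a deterministic contractive piece (by adding and subtracting $\mathbb{E}_{s'}\max_{a'}Q_t(s',a')$) plus a zero-mean noise term that forms a martingale difference sequence with respect to the filtration of the sample trajectory.

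I would then partition the $T$ steps into epochs whose lengths $\tau_k$ grow with $k$. By the definition of covering length, boosting a window of $L$ steps to $L\log(1/\delta_k)$ steps guarantees every $(s,a)$ is visited per epoch with probability $1-\delta_k$; choosing $\delta_k$ summable gives a high-probability event on which coverage holds simultaneously for all epochs. On this event, I would prove by induction on $k$ that $\|\Delta_{T_k}\|_\infty \leq V_{\max}\gamma^k + \epsilon_k$: the $\gamma^k$ factor emerges because $\tau_k$ is selected so that the cumulative per-$(s,a)$ step-size sum $\sum_{n\in\text{epoch}} n^{-\omega}$ equals one, producing a full Bellman backup, while $\epsilon_k$ absorbs the noise via Azuma--Hoeffding on the martingale differences scaled by $\alpha_t$, conditioning on visit counts concentrating around their expectations.

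Matching the two terms of the stated bound: the contraction requires $k \gtrsim \log(V_{\max}/\epsilon)/(1-\gamma)$ epochs, and because a polynomial schedule forces $\tau_k \asymp L\cdot k^{\omega/(1-\omega)}$ in order to achieve a unit step-size sum per coordinate, telescoping $\tau_1+\cdots+\tau_k$ yields the second term $(L/(1-\gamma))^{1/(1-\omega)}$. The first term comes from concentration: forcing the noise fluctuation $V_{\max}\sqrt{\sum\alpha_n^2}$ below $\epsilon(1-\gamma)$ requires at least $\bigl(V_{\max}^2/((1-\gamma)\epsilon)^2\bigr)^{1/\omega}$ visits per $(s,a)$, which translated through $L$ (covering cost) gives the $L^{1+3\omega}$ factor after accounting for the $\widetilde{\Theta}$ log terms from union bounds across epochs and state-action pairs.

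The main obstacle I anticipate is the asynchrony of coordinate-wise updates under random step sizes $1/(\#(s,a))^\omega$: different $(s,a)$ pairs are updated at different times by different amounts, so I cannot invoke a single uniform contraction. My planned remedy is to condition on the high-probability event that $\#_t(s,a)$ concentrates around $t/(SA)$ up to a constant factor (enabled by covering-length control), so that the learning-rate schedule becomes effectively deterministic and a per-epoch contraction can be cleanly extracted. A secondary subtlety is that the noise injected in earlier epochs propagates through the $\max$ operator in later updates; I would control this using the induction hypothesis bounding $\|\Delta\|_\infty$ together with the $1$-Lipschitz property of $\max$, so that noise contributions compose additively rather than multiplicatively across epochs.
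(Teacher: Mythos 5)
You should first note that the paper contains no proof of this statement at all: it is imported verbatim as Theorem 4 of \citet{even2003learning}, and the appendix merely restates it. So the only meaningful comparison is with the original phased stochastic-approximation proof in that reference, whose skeleton your sketch does broadly reproduce: write $\Delta_t = Q_t - Q^*$, split each update into a contractive deterministic part plus a bounded martingale-difference noise term, define epochs long enough (via the covering length, boosted by $\log$ factors) that every $(s,a)$ is updated, prove by induction that the error drops by a factor $\gamma$ per epoch up to a noise term controlled by Azuma--Hoeffding, and sum the epoch lengths dictated by the polynomial learning rate $1/(\#(s,a))^{\omega}$.

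The genuine gap is your proposed remedy for asynchrony: conditioning on the event that $\#_t(s,a)$ concentrates around $t/(SA)$ and asserting this is ``enabled by covering-length control.'' The covering length gives only a one-sided guarantee --- every pair is visited at least once in each window of $O(L\log(\cdot))$ steps, hence at least order $t/L$ times by time $t$ --- and provides no upper bound and no uniformity across pairs. In exactly the MDPs this paper studies (highly nonuniform visitation, e.g.\ the chain, where the stationary mass of some states is exponentially small relative to others), the counts are nowhere near $t/(SA)$, so the event you want to condition on can have negligible probability and the ``effectively deterministic'' learning-rate schedule, and hence your clean per-epoch contraction, does not follow. The original argument avoids this by using only the two-sided trivial bounds $t/\widetilde{O}(L)\lesssim \#_t(s,a)\le t$, so that within an epoch the learning rates range over roughly $[t^{-\omega},(t/L)^{-\omega}]$; it is precisely this spread that generates the $L^{(1+3\omega)/\omega}$-type dependence in the first term. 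Under your uniform-count assumption one would instead obtain an $(SA)$-dependent rate and could not recover the stated $L^{1+3\omega}$ factor, whose origin your sketch leaves essentially unexplained (``translated through $L$'' plus log terms). To repair the plan, drop the concentration claim, keep only the covering-based lower bound on visit counts, and redo the epoch recursion tracking the worst-case ratio of learning rates across state-action pairs; that is where the extra powers of $L$ actually come from.
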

This theorem implies that, if the covering length $L$ of the exploration policy is polynomial in all parameters, we could learn the near optimal $Q$ function in polynomial time, and then achieve a near optimal policy by taking the greedy policy of this $Q$ function. Thus the covering length would be a good measure for us to evaluate the exploration quality of a policy, and it allows us to focus on exploration. In this work we consider Q learning combined with random walk exploration policy. We are interested in the minimum number of steps we need before switching to near-optimal greedy exploitation to guarantee a sufficient exploration. We intend to get a problem-specific bound by structural parameters of an MDP, to characterize when the exploration problem of an MDP is simple.

%% === Section: Bounds and Theories ===
\section{Covering Length Bound}
\label{sec:theory}
In this section, we will bound the covering length by the stationary distribution over states for random walk and Laplacian eigenvalues. The stationary distribution characterizes the asymptotic occupancy of states, and reflects asymptotically how good exploration will be. The smallest non-trivial eigenvalue of the Laplacian, is bounded by a geometric property named the Cheeger constant that intuitively measures the bottleneck of stationary random walk flow. These two parameters are both related to the asymptotic behavior of random walk. One natural question is that if we are given that asymptotically random walk can explore well, can we achieve polynomial sample complexity bound for finite sample exploration, and we show that through the following theorem.

Given the random walk policy $\pi_{RW}$, we have a transition matrix under this policy, $P^\pi_{RW}$, and we can view it as a transition matrix for a directed weighted graph, denoted as $G(P^\pi_{RW})$. If $P^\pi_{RW}(u,v)>0$ we say there is an edge from $u$ to $v$ with weight $P^\pi_{RW}(u,v)$ in $G$. For the rest of this section, we use $G$ and $P$ to refer to this graph and its transition matrix. It is known that for the transition matrix $P$, there is a unique left eigenvector $\phi$ such that $\phi(s)>0$ for any $s$ and $\phi P = \phi$, $\| \phi \|_1 = 1$. This eigenvector $\phi$ is also the stationary state distribution under the random walk policy. We follow the definition of graph Laplacian for a directed graph $G$ proposed by \citet{chung2005laplacians}:  
$$ \mathcal{L} = I -\frac{\Phi^{1/2}P\Phi^{-1/2} + \Phi^{-1/2}P^{*}\Phi^{1/2} }{2},$$
where $\Phi$ is a diagonal matrix with entries $\Phi(s,s) = \phi(s)$. Usually the graph Laplacian is only defined on undirected graph, and the intuition in \citep{chung2005laplacians} is that take the average of transition matrix $P$ and its transpose to define an undirected graph, then normalized the transition matrix, to introduce the Laplacian for weighted directed graph. The smallest eigenvalue of Laplacian $\mathcal{L}$ is zero. Let $\lambda$ be the smallest non-zero eigenvalue. In the following theorem, we will bound the covering time of random walk policy by the eigenvalues of $\mathcal{L}$ and the stationary distribution $\phi$.

\begin{theorem}
\label{thm:laplacian}
The covering length of a irreducible MDP under random walk policy is at most
$$8A\ln(4SA) \left( 2\ln\left(2/\min_s \phi(s)\right)/\ln(\frac{2}{2-\lambda}) + 1 \right)  \sum_{s}\frac{1}{\phi(s)},$$
where $\phi$ is the stationary distribution vector of random walk and $\lambda$ is the smallest non-zero eigenvalue of the Laplacian of the directed graph induced by random walk over MDP. The Laplacian is defined by \citet{chung2005laplacians}:  
$ \mathcal{L} = I -\frac{\Phi^{1/2}P\Phi^{-1/2} + \Phi^{-1/2}P^{*}\Phi^{1/2} }{2},$
where $\Phi$ is a diagonal matrix with entries $\Phi(s,s) = \phi(s)$ and P is the transition matrix $P(s,s') = \sum_{a} \frac{1}{A}T(s'|s,a)$.
\end{theorem}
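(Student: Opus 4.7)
The bound naturally decomposes into three factors: the coupon-collector factor $A\ln(4SA)$ over actions, the mixing-time factor $2\ln(2/\min_s\phi(s))/\ln(2/(2-\lambda))+1$ for the lazy random walk on the directed graph $G$, and the factor $\sum_s 1/\phi(s)$. The plan is to first reduce ``cover every state-action pair'' to ``visit every state sufficiently many times''; then bound the per-block probability of visiting a given state via Chung's directed-Laplacian mixing-time result; and finally combine using a blockwise union bound that exploits the sum of reciprocals of the stationary distribution.

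\textbf{Step 1: reducing actions to states.} Because the random walk policy chooses each action with probability $1/A$ independently of the past, conditional on $K$ distinct visits to a state $s$ the induced sequence of actions is i.i.d.\ uniform over $\mathcal{A}$. A standard coupon-collector bound shows that $K = O(A\ln(SA))$ visits suffice for every action at $s$ to have been chosen with probability $\ge 1 - 1/(4S)$. Union-bounding over the $S$ states, covering all $SA$ state-action pairs reduces (with probability $\ge 3/4$) to visiting each state at least $K$ times; this explains the leading factor $A\ln(4SA)$.

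\textbf{Steps 2 and 3: mixing time and cover time via blocks.} Chung's bound for the directed Laplacian gives a TV-distance decay of the form $(1-\lambda/2)^t$ times a prefactor scaling like $1/\sqrt{\min_s\phi(s)}$; setting this below $\min_s\phi(s)/2$ yields a mixing time $\tau = 2\ln(2/\min_s\phi(s))/\ln(2/(2-\lambda)) + 1$. Consequently, for any starting state $x$ and any target state $y$, $P^\tau(x,y) \ge \phi(y)/2$, and hence the probability of visiting $y$ at least once during a length-$\tau$ block is at least $\phi(y)/2$. Partitioning time into $k$ such blocks (using the Markov property to restart the analysis at the start of each), the probability that state $s$ is never visited is at most $(1-\phi(s)/2)^k \le e^{-k\phi(s)/2} \le 2/(ek\phi(s))$. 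Summing over states,
\[
\Pr[\text{some state not visited in } k \text{ blocks}] \le \frac{2}{ek}\sum_s \frac{1}{\phi(s)},
\]
so $k = O(\sum_s 1/\phi(s))$ blocks suffice to ensure every state is visited at least once with probability $\ge 3/4$. Iterating this on $K$ independent super-blocks forces every state to be visited at least $K$ times, and multiplying $K \cdot k \cdot \tau$ yields the claimed covering length.

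\textbf{Main obstacle.} The key nontrivial maneuver is squeezing out the factor $\sum_s 1/\phi(s)$ rather than the weaker $S \cdot \max_s 1/\phi(s)$ that a naive ``worst-state'' union bound would give; this relies on applying the estimate $e^{-y} \le 1/(ey)$ \emph{before} summing over $s$ rather than after. A secondary technical point is that I am using Chung's directed-graph mixing-time bound rather than undirected reversible-chain theory, so I must verify that the lazy version of $P(s,s')=\sum_a T(s'|s,a)/A$ satisfies the hypotheses of that bound (irreducibility is assumed in the theorem, and laziness can be introduced without changing the final rate up to constants). The remaining work is routine concentration and union-bound bookkeeping.
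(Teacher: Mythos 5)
Your overall skeleton matches the paper's: reduce covering all state--action pairs to repeated state visits via a coupon-collector bound (the paper's Lemma \ref{lem:coveract}), use Chung's directed-Laplacian convergence result to argue that from any start a target state $v$ is reached within $k_0 = 2\ln(2/\phi_{\min})/\ln(\frac{2}{2-\lambda})+1$ steps with probability at least $\phi(v)/2$, and then accumulate visits blockwise so that the time budgets total $O(A\ln(SA)\,k_0\sum_s 1/\phi(s))$. However, there is a genuine gap at the central step. Chung's bound applies to the \emph{lazy} walk $\mathcal{P}=(I+P)/2$, not to the walk the agent actually executes; your intermediate claim ``$P^{\tau}(x,y)\ge \phi(y)/2$'' is false for the non-lazy kernel in general (a periodic chain, e.g.\ a deterministic two-cycle, has $P^{\tau}(x,y)=0$ for half of all $\tau$ even though its Laplacian eigenvalue is bounded away from zero). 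What you actually need is that the probability that the \emph{original} walk visits $y$ within $\tau$ steps is at least $\mathcal{P}^{\tau}(x,y)$, and dismissing this as ``laziness can be introduced without changing the final rate up to constants'' skips exactly the part where the paper does its main technical work: Lemma \ref{lem:walkprob} proves this domination by decomposing each length-$k$ lazy path into its first-visit segment, its lazy self-loops, and its return loops at the target. (A coupling argument --- run the original walk and insert independent coin-flip hold steps --- would also suffice, but some such argument must be supplied, since the covering length is defined for the actual random-walk policy, not its lazy version.)

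A secondary flaw: the step ``iterating this on $K$ independent super-blocks forces every state to be visited at least $K$ times'' does not follow, because each super-block covers all states only with probability $3/4$, so all $K\approx A\ln(SA)$ super-blocks succeed simultaneously only with probability roughly $(3/4)^K$. This is repairable at the cost of constants (take $O(K)$ super-blocks and apply a Chernoff/stochastic-domination bound to the number of successful ones), or avoided entirely the way the paper does it: for each state $v$ separately, treat consecutive $k_0$-step windows as Bernoulli trials with success probability at least $\phi(v)/2$, invoke a sample-size bound (Lemma 56 of \citet{li2009unifying}) to get $A\ln(4SA)$ visits to $v$ within $8A\ln(4SA)k_0/\phi(v)$ steps with failure probability $1/(4S)$, then sum the per-state budgets over $v$ and union bound. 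Your observation about applying $e^{-y}\le 1/(ey)$ before summing is a nice way to recover the factor $\sum_s 1/\phi(s)$ along your route, but the paper obtains that factor directly from the per-state time budgets without it.
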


It is known that in reversible Markov chains mixing time can be bounded by $\frac{1}{\epsilon\min_s\phi(s)} \frac{1}{1-\lambda^*}$ \citep{levin2017markov}, where $\lambda^*$ is the largest absolute value of eigenvalue of $P$, except $1$. Note that this $\lambda$ is the second largest eigenvalue of $P$ instead of the Laplacian, which is a normalized version of $I-P$, thus the relationshio between $\lambda^*$ and the second smallest eigenvalue of Laplacian, which is used in our paper, can be bounded. This mixing time bound gives us a cover time bound which has the same order of magnitude with our Theorem \ref{thm:laplacian}, in terms of $S$, $\lambda$ and $\min_s \phi(s)$. \citet{ding2011cover} also shows a similar result. Theorem \ref{thm:laplacian} remove the reversible assumption by considering the lazy random walk in directed graph and linking it to the cover time.

This bound immediately implies a PAC RL bound if $\frac{1}{\lambda}$ and $\frac{1}{\min_s \phi(s)}$ is polynomial.
\footnote{if $\phi_{\min}$ = 0 then this will be infinite, but this only occurs if the MDP is reducible. In that case, only the strongly connected component we are in is really matters for our exploration.} %Some states with exponentially small probability to be reach would hurt the bound, but that case is always hard since we do not know if there is hugh reward there or not, before we actually visit it.}
This shows that the Laplacian eigenvalue $\lambda$ and the stationary distribution are important factors for exploration. It is still not clear for what kind of MDPs these terms are polynomial. We will show two bounds for $1/\lambda$ and $1/\phi_{\min}$, which may provide more intuitive insight.

\textbf{Eigenvalue $\lambda$}: In graph theory, the second smallest eigenvalue of the Laplacian could be bounded by the Cheeger constant (also known as conductance). This will give us a more intuitive and geometric view of what $\lambda$ actually means for an MDP and when it is small. We define a flow over the graph induced by the stationary distribution of random walk as: $F(u,v) = \phi(u)P(u,v)$.
Then we write: $F(\partial U) = \sum_{u \in U, v \notin U} F(u,v)$, and $F(U) = \sum_{u \in U} \phi(u)$. 
The Cheeger constant is:
$h = \inf_{U} \frac{F(\partial U)}{\min \{ F(U), F(\overline{U})\}}$.
The Cheeger constant measures the relatively smallest bottleneck in the flow induced by stationary distribution. The Cheeger bound of $\lambda$ says that $h \ge \lambda \ge \frac{h^2}{2}$, which means $1/\lambda$ is polynomial if and only if $1/h$ is polynomial.

\textbf{Stationary distribution}: We know that for an (weighted) undirected graph, the stationary distribution on state $s$ is $O(\frac{d(s)}{\sum_s d(s)})$ where $d(s)$ is the degree of $s$. Then we define a property of MDPs:
\begin{definition}
An MDP has \recoverproperty~ if for any $s,s'$, there is a bijections $f$ between action sets $\{ a | P(s'|s,a)>0\}$ and $\{ a' | P(s|s',a')>0\}$ s.t. $P(s'|s,a) = P(s|s',f(a))$.
\end{definition}
If a MDP has \recoverproperty, we can construct an undirected graph such that the random walk on the MDP is equivalent with a random walk on this graph. The weight between two state in this graph is defined as:
$ w(u,v) = \sum_{a \in \mathcal{A}} P(v|u,a) = \sum_{a \in A} P(u|v,a) $.
One can verify that the random walk over the MDP has the same transition probability with random walk on this graph. Thus they also have the same stationary distribution, which is polynomial of $S,A$, computed from the undirected graph.

\begin{figure}[h!]
\centering
\subfigure{\label{fig:2roomoriginal}
\includegraphics[width=0.2\textwidth]{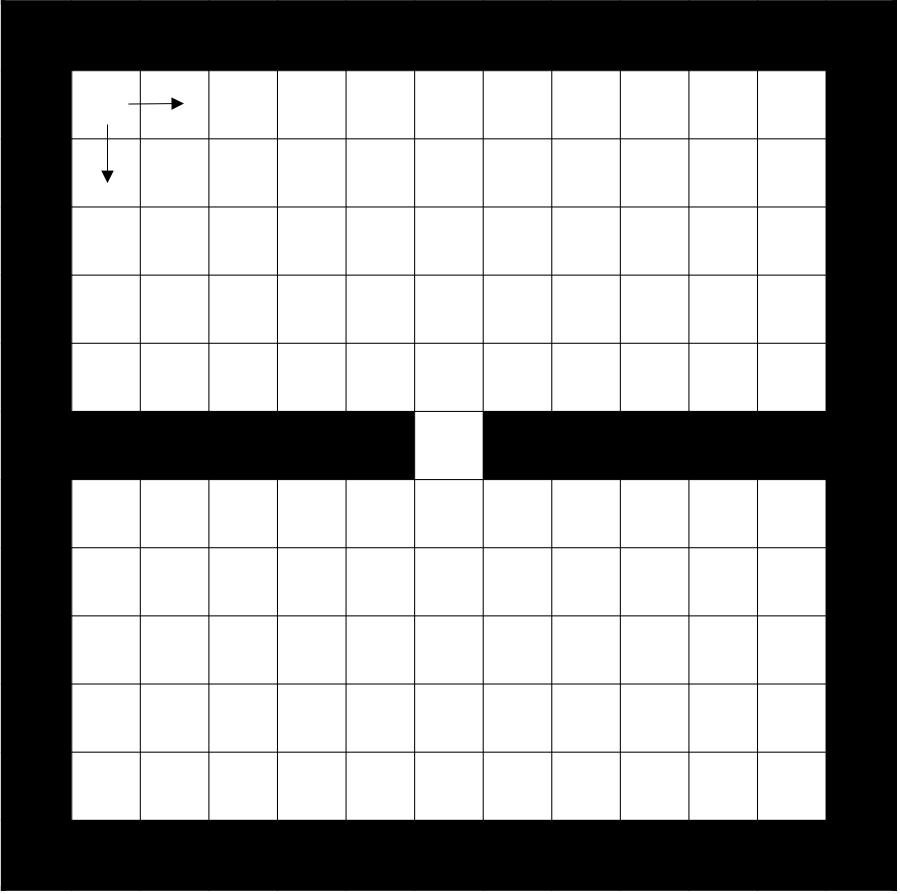}}
\subfigure{
  \includegraphics[width=0.3\textwidth]{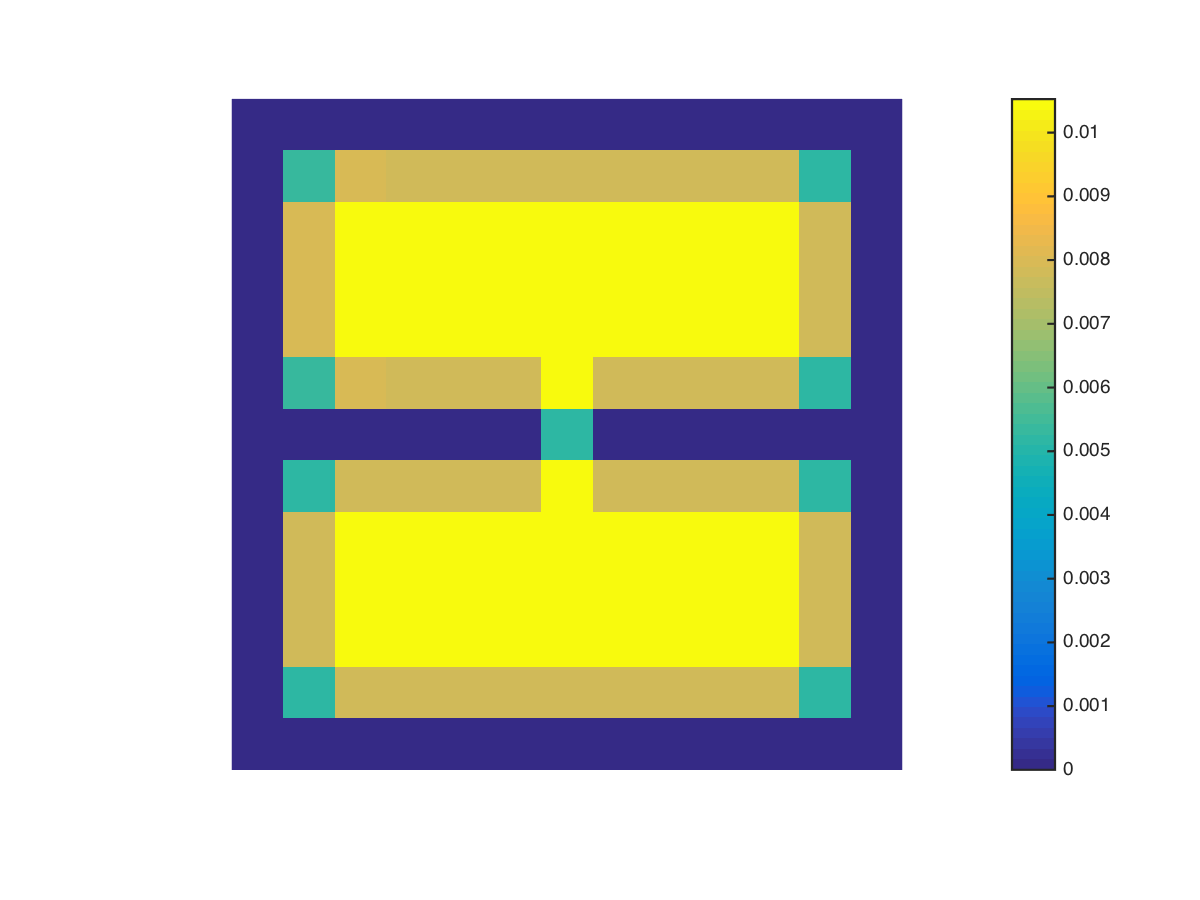}
   %\caption{Heat map of stationary distribution}
   \label{fig:2roomheat}}
%\caption{}
\caption{Left: two room domain. Right: the stationary distribution heat map}
\label{fig:2room}
\end{figure}

As a complementary, in appendix we list properties that give PAC RL bounds in certain cases where exploration should be easy intuitively, but are not covered by the bound in this section: When the actions behave similarly, or when all states are densely connected.

\section{Theoretical Bounds and Links to Empirical Results}%Connection with Practical Domains}
Our investigation was inspired by the recent empirical successes of deep reinforcement learning which relied on simple exploration mechanisms, and we hope that our theoretical analysis will both predict the hardness of domains that have been specifically constructed to require strategic exploration, as well add further insight into the hardness of other domains. In this section, we illustrate how our approach can explain some of the ease of exploration in some popular domains, as well as the hardness of exploration in others. %First we will show three domains that satisfy the property of \recoverproperty~.% so that $1/\phi_{\min}$ is polynomial with $S,A$ in these domains. %For these domain, we can also verify that the Cheeger constant is also polynomial.

\textbf{Grid World}: Grid world is a group of navigation domains where we need to control an agent to walk in a grid world, collect reward, avoid walls and holes. Most grid worlds with deterministic or other typical action settings have locally symmetric actions. Under this condition, random walk over the grid world is equivalent to a random walk on an undirected graph. Thus $1/\phi_{\min} = O(SA)$ and it is a polynomial function of MDP parameters.

\textbf{Taxi} \citep{dietterich2000hierarchical}: Taxi is a 5x5 gridworld. A passenger starts at one of the 4 locations marked in a grid world, and its destination is randomly chosen from one of the 4 locations. The taxi starts randomly on any square, and the goal is to pickup or dropoff the passenger. This domain, as well as the two room example we discussed previously, are widely used testing domains in the hierarchical RL literature, since options/modular policy are expected to achieve more efficient exploration than primitive actions. It is also equivalent with undirected graphs following from the property of \recoverproperty, if picking up/dropping off are not invertible actions. In that case, our bounds implies that random walk could learn the optimal value function of these domains efficiently. 

\textbf{Pong}: Pong is one of the Atari games that is relatively easy for DQN with $e$-greedy \citep{mnih2013playing}. In this domain, one plays pong with a computer player by moving the padder in $y$ axis, hitting the ball back. Interestingly, we can approximately view Pong as satisfying the property of \recoverproperty~by considering a state abstraction. In Pong, the angle of reflection is a bijection function of the hitting position on the paddle, not of angle of incidence, which implies that we could achieve any possible reflection angle in the possible angle domain by proper action. Consider a game state abstraction that consists only of the last ball incidence angle $\theta$ to the agent's paddle. That means, we view all frames after the ball leaves paddle until another hitting as the same state. This makes several notable simplifications, ignoring: the ball's velocity, boundary\footnote{Actually the boundary case could be treated by mirror reflection transformation. We would view the whole game as a mirror version of playing in the extended space, after hitting the boundary.}. Since we are playing in a boundless field, it is reasonable to view balls with different $y$ coordinates of hitting position as the same state. For simplicity we also assume the agent's opponent executes a deterministic policy that only depends on incidence angle, so that the mapping from incidence angle to reflection angle is a bijection, denoted as $f$. 

Under these settings, we can show Pong has the \recoverproperty. For any state $\theta_1$, if we execute an action $a_1$ so that the reflection angle is $\theta_1^{'}$, then the next state, which is the angle after the computer opponent takes an action would be $\theta_2 = f(\theta_1^{'})$. For this state, there exist an action $a_2$ such that the reflection angle is $f^{-1}(\theta_1)$. Since the mapping from action to reflection angle and $f$ are both bijection, the mapping between $a_1$ and $a_2$ is also bijection. Thus we could say random walk in a proper abstracted state space of Pong is equivalent with random walk on an undirected graph, and then yields polynomial sample complexity. That may intuitively explain the success of $e$-greedy in this domain.

%Our results also are consistent with some MDPs that are known to require strategic exploration. %In a $n$-state Chain MDP \citep{li2012sample,whitehead2014complexity}, it takes $\Theta(2^n)$ samples in expectation to visit the right end state for one time. That results in an exponential sample complexity. This matches our prediction: the stationary distribution  of random walk on state $s_{i}$ is $\Theta(\frac{1}{2^i})$. $1/\phi_{\min}$ is $\Theta(2^S)$, increasing exponentially with the state size in this case.

\textbf{Chain MDP}: The chain MDP has been previously introduced to motivated the need for strategic exploration\citep{li2012sample,whitehead2014complexity}. The MDP has n+1 states, the start state is the leftmost state $s_0$, and at each state $s_i$ there are 2 deterministic actions, one is going right to $s_{i+1}$ (except the right end states $s_{n}$ which has a self loop action) and the other is going back to $s_0$. $Q$ learning with $e$-greedy or random walk does poorly in this example. It takes $\Theta(2^n)$ samples in expectation to visit the right end state for one time, resulting in an exponential sample complexity. That matches what we can learn from our bound: The stationary distribution  of random walk on state $s_{i}$ is $\Theta(\frac{1}{2^i})$, and $1/\phi_{\min}$ is $\Theta(2^S)$. %, increasing exponentially with the state size in this case. %It can be easily verify that the inverse Cheeger constant is also $\Theta(2^S)$, by taking $U=\{s_{n}\}$. The bottleneck of stationary random walk flow is between $s_{n}$ and all other states, which implies that the difficulty of reaching $s_n$ in this example. 

\textbf{Montezuma's Revenge}: Montezuma's Revenge is a relatively hard game among different Atari 2600 games for DQN with $e$-greedy exploration\citep{mnih2013playing}. This game requires the player to navigate the explorer through several rooms. The explorer may die on the way of traps are triggered. We note that Montezuma's Revenge has a mechanism which brings one back to the start point after death. At a high level, that ``trapdoor'' structure is captured by the chain MDP example, and will result in an exponentially small stationary distribution of the end point. Game domains, even at a high level, may have more than one chain, but $\phi_{\min}$ could still be exponential in the maximum chain length. Note that some games like Pong or Enduro also have the restart mechanism, but that restart point is distributed more uniformly over the whole state space. This breaks the chain property and will not result in an exponentially small stationary distribution.

\begin{figure}[t!]
\centering
  \subfigure[Chain MDP]{
  \begin{minipage}[c]{0.2\textwidth}
  \centering
  \label{fig:chain}
  \includegraphics[width=\textwidth]{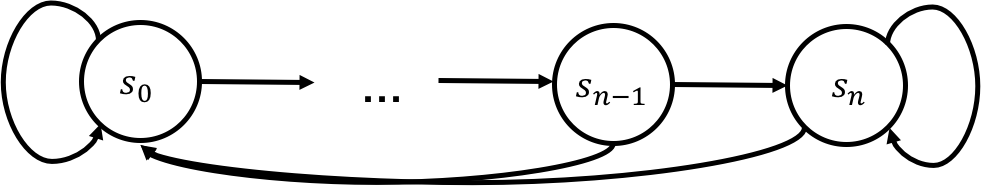}
  \end{minipage}
  %\caption{Chain MDP}
  }
  \hspace{0.5in} 
  \subfigure[Grid World]{
  \centering
  \includegraphics[width=0.2\textwidth]{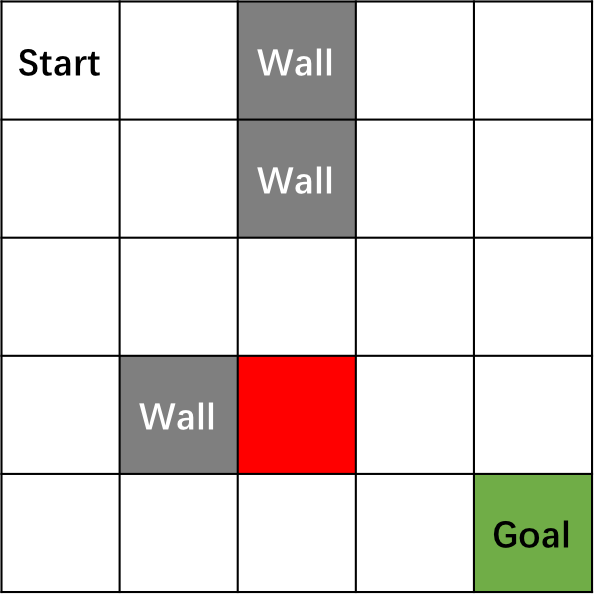}
  \label{fig:grid}
  }
  \hspace{0.5in} 
  \subfigure[Taxi \citep{dietterich2000hierarchical}]{
  \centering
  \includegraphics[width=0.2\textwidth]{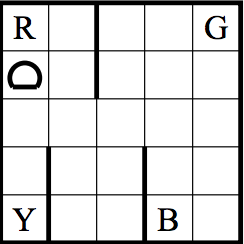}
  \label{fig:taxi}
  }
\caption{Domains with different order of stationary distribution}
%\label{fig:test}
\end{figure}

\section{Conclusion}
In this paper we present several structural properties of MDPs that give upper bound on the sample complexity of $Q$ learning with random exploration followed by exploitation. We also link these properties to some conceptual testing domains as well as empirical benchmark domains, towards understanding the recent empirical success. We hope the knowledge of these properties might help guide practitioners in selecting exploration strategy, and understanding whether and when strategic exploration is necessary. 

%Extending our results to generic $e$-greedy algorithms is also an interesting future direction. There are some reasons for hope. For example, if optimal actions do not require long horizon planning, such as in Pong, then $e$-greedy is likely to still be PAC, because greedy actions will not unduly influence the agent's ability to effectively explore. The work from \citet{jiang2016structural} that bound loss due to short planning horizon may relate this setting to more general cases. In other cases, we may put assumptions over reward structure or intrinsic reward settings to bound the performance of greedy actions. In contrast, for adversarial reward structure, e-greedy may not be PAC in general, since greedy actions could be easily distracted by local rewards and make it hard to explore enough.

% There is more work to be done towards finding problem specific bound for linear $e$-greedy instead of random walk exploration, which we leave for future explorations. To bound the performance of greedy part, we may rely on additional assumptions of reward structure or intrinsic reward settings. 

\acks{
This work was supported in part by Siemens and a NSF CAREER grant.}

\bibliography{ref.bib}

\newpage
\setcounter{theorem}{0}

\appendix
\section{Preliminaries}
For completeness and clarity we include some definitions and lemmas which is helpful in our proof, and included in the main body.
\begin{definition}
The covering length, denoted by $L$, is the number of time steps we need to visit all state-action pairs at least once with probability at least $1/2$, starting from any pair.
\end{definition}

\begin{theorem}
%\label{thm:qlearning}
(Theorem 4 from \citet{even2003learning}) Let $Q_T$ be the value function after $T$ step $Q$ learning update, with learning rate $\alpha_t(s,a) = 1/(\#(s,a))^\omega$. Then with probability at least $1 - \delta$, we have $\|Q_T - Q^{*}\|_\infty \le \epsilon$, given that
$$
T \ge T_0 = \Theta \left( \left( \frac{L^{1+3\omega}V_{\max}^2 \ln\left( \frac{SAV_{\max}}{\delta(1-\gamma)\epsilon}\right) }{(1-\gamma)^2\epsilon^2}\right)^{\frac{1}{\omega}} + \left( \frac{L}{1-\gamma} \ln\frac{V_{\max}}{\epsilon} \right)^{\frac{1}{1-\omega}} \right),
$$
where $L$ is the covering length of the exploration policy we use in $Q$ learning
\end{theorem}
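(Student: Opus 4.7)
The plan is to reproduce the classical Even--Dar--Mansour argument: decompose the Q-learning error recursion into a deterministic-contraction part and a stochastic-noise part, bound each separately, then chain constant-factor error reductions through $O(\log(V_{\max}/\epsilon)/(1-\gamma))$ phases. Let $\Delta_t(s,a) = Q_t(s,a) - Q^*(s,a)$. From the update
$$Q_{t+1}(s,a) = (1-\alpha_t)Q_t(s,a) + \alpha_t\bigl[r_t + \gamma \max_{a'} Q_t(s',a')\bigr],$$
and the Bellman equation for $Q^*$, I would write
$$\Delta_{t+1}(s,a) = (1-\alpha_t)\Delta_t(s,a) + \alpha_t \gamma\bigl[\max_{a'}Q_t(s',a') - \max_{a'}Q^*(s',a')\bigr] + \alpha_t w_t(s,a),$$
where $w_t$ is a bounded martingale difference w.r.t.\ the sampling history. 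I would then introduce comparison sequences $Y_t, Z_t \geq 0$ with $|\Delta_t|\leq Y_t + Z_t$ pointwise, where $Y_t$ obeys the noise-free recursion driven by $\gamma\|Y_t\|_\infty$ and $Z_t$ obeys the homogeneous recursion driven only by $\alpha_t w_t$.

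For the contraction part $Y_t$, I would show that after any block of steps long enough for every state-action pair to be visited at least $n$ times with high probability, the polynomial learning rate $1/(\#(s,a))^\omega$ makes $\prod_{i=1}^n(1-1/i^\omega)$ decay roughly as $\exp(-n^{1-\omega})$, so taking $n = \Theta(1/(1-\gamma))^{1/(1-\omega)}$ forces $\|Y_{t+\text{block}}\|_\infty \leq \tfrac{1+\gamma}{2}\|Y_t\|_\infty$, i.e.\ a constant-factor shrinkage per phase by the $\gamma$-contraction of the Bellman operator in $\ell_\infty$. Converting ``$n$ visits per pair'' to ``number of environment steps'' uses the covering length: by chaining $\Theta(\log(SA/\delta))$ independent windows of length $L$, each state-action pair accumulates $\Omega(n)$ visits with probability $1-\delta/2$. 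Iterating over $\Theta(\log(V_{\max}/\epsilon)/(1-\gamma))$ phases drives $Y$ below $\epsilon/2$ and produces the $(L/(1-\gamma))^{1/(1-\omega)}$ term.

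For the noise part $Z_t$, I would apply a martingale concentration inequality (Azuma--Hoeffding, with Bernstein if tighter variance is needed) to the weighted martingale $\sum_i \beta_i w_i$ whose weights $\beta_i$ are determined by the compounded $(1-\alpha_j)$ factors from the $Z_t$ recursion. Since $|w_i|\leq V_{\max}$ and $\sum_i \alpha_i^2 = \Theta(n^{1-2\omega})$ for $\omega\in(1/2,1)$, after $n$ visits per pair the contribution of $Z$ is $O(V_{\max}\, n^{-\omega/2}\sqrt{\log(SA/\delta)})$. Requiring this to be at most $(1-\gamma)\epsilon$ so the noise cannot undo the geometric contraction, and converting visits back to total steps through $L$, yields $n = \widetilde{O}(L^{1+3\omega}V_{\max}^2 / ((1-\gamma)\epsilon)^2)$ to the $1/\omega$ power, giving the first term of $T_0$. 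A union bound over all $(s,a)$ and over all phases contributes the $\log$ factor inside.

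The main obstacles I anticipate: (i) correctly defining $Y_t$ and $Z_t$ so that their sum pointwise dominates $|\Delta_t|$ despite the coupling introduced by $\max_{a'}$, which requires the ``pseudo-contraction'' lemma that replaces the max with an $\ell_\infty$ norm on $Y$ alone while the noise feeds only into $Z$; (ii) promoting the covering length's constant-probability guarantee to a high-probability per-pair visit count over the full horizon, which needs chaining disjoint length-$L$ blocks and a careful union bound whose logarithmic cost must match the $\log(SA V_{\max}/(\delta(1-\gamma)\epsilon))$ inside $T_0$; (iii) reconciling the two exponents $1/\omega$ and $1/(1-\omega)$ --- they arise from noise-dominant and contraction-dominant regimes respectively, and the final $T_0$ is their sum because each regime separately must be survived, so I must show no worse coupling term (product, etc.) appears.
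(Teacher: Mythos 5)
This statement is not proved in the paper at all: it is imported verbatim as Theorem~4 of Even-Dar and Mansour (2003) and used as a black box to convert covering-length bounds into sample-complexity bounds, so there is no in-paper proof to compare against. Your outline is a faithful reconstruction of the original Even-Dar--Mansour argument: the decomposition $|\Delta_t|\le Y_t+Z_t$ into a noise-free $\gamma$-contracting sequence and a weighted-martingale noise sequence, the phase-by-phase constant-factor shrinkage over $\Theta(\log(V_{\max}/\epsilon)/(1-\gamma))$ phases, and the use of $L$ to translate per-pair visit counts into environment steps are exactly the ingredients of their proof, and the two exponents $1/\omega$ and $1/(1-\omega)$ do arise from the noise-dominant and contraction-dominant regimes as you say. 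Two caveats: first, the result requires $\omega\in(1/2,1)$ (your own variance calculation implicitly assumes this, but the restriction should be stated, since for $\omega\le 1/2$ the weighted-martingale variance does not vanish as needed); second, the step ``converting visits back to total steps through $L$ yields $L^{1+3\omega}$'' is the one genuinely delicate quantitative point --- the learning rate at wall-clock time $T$ depends on the (random) visit count, so the compounded weights $\beta_i$ and the covering-time chaining interact, and the exponent $1+3\omega$ has to be derived rather than pattern-matched to the target. As a blind reconstruction of an external proof the plan is sound, but it is a sketch whose hardest bookkeeping is deferred.
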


\setcounter{theorem}{4}
Diameter \citep{auer2007logarithmic} is a widely used parameter to measure the reachability of the MDP. Intuitively it means the longest expected time to reach one state from the other. More formally:
\begin{definition}
(Diameter)
$$ D = \max_{s,s'} \min_{\pi} \mathbb{E} \left[ \inf\left\{ t \in \mathbb{N}: s_t = s' \right\} | s_0 = s, \pi \right] $$
\end{definition}

The following lemma allows us to only focus on how to cover all states in the later analysis.
\begin{lemma}
\label{lem:coveract}
If we visit a state more than $A\ln(4SA)$ times, a random walk policy will sample every action at least once with probability at least $1-\frac{1}{4S}$.
\end{lemma}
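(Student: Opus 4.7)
The plan is to apply a simple union bound across the $A$ actions, leveraging the independence of the random walk's action selections at each visit to the state. Under the random walk policy, every time we arrive at the given state the action is chosen uniformly at random with probability $1/A$, independently of everything else. So if the state is visited $n$ times, the probability that a \emph{fixed} action $a$ is never selected is exactly $(1 - 1/A)^n$.

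Next, I would use the standard inequality $(1 - 1/A)^n \le e^{-n/A}$, and then take a union bound over all $A$ actions. This gives that the probability that \emph{some} action is missed after $n$ visits is at most $A e^{-n/A}$. Substituting $n = A \ln(4SA)$ yields the bound $A \cdot (4SA)^{-1} = 1/(4S)$, which is exactly the probability claimed in the lemma. Since the statement asks for strictly more than $A \ln(4SA)$ visits, the inequality is strict and the conclusion follows.

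There is no real obstacle here: the argument is a one-line coupon-collector-style calculation and does not depend on any mixing or graph-theoretic property of the MDP, only on the independence of action choices under $\pi_{RW}$ and on the uniform probability $1/A$ per action. The only modest care needed is to note that the number of visits is a random stopping time determined by the trajectory, but because action selection at each visit is independent of the past trajectory (conditional on being at the state), the Bernoulli structure required for the bound still applies exactly. This lemma is then combined elsewhere with a covering argument on the states to yield the full covering-length bound in Theorem \ref{thm:laplacian}.
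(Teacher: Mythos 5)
Your proof is correct: the uniform-at-random action choice at each visit, the bound $(1-1/A)^n \le e^{-n/A}$, and the union bound over the $A$ actions give exactly the claimed $1-\frac{1}{4S}$ guarantee, and your remark about conditional independence at visit times handles the only subtlety. The paper states this lemma without an explicit proof, and your argument is precisely the standard one it implicitly relies on, so there is nothing further to reconcile.
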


For completeness, we also include a lemma about relation of $Q$ value accuracy and its greedy policy performance, which is widely used in $Q$ learning literature.
%Once we get near optimal $Q$ function, the following lemma guarantees that the greedy policy is near optimal.
\begin{lemma}
\label{lem:optimalpi}
Let $\pi$ be the greedy policy of an action value function $Q$. If $\|Q^*-Q \|_{\infty} \le \epsilon$, then $\|V^{\pi^*} - V^{\pi} \|_{\infty} \le \frac{2\epsilon}{1-\gamma}$.
\end{lemma}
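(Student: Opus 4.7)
The plan is to exploit the classical telescoping identity that relates $V^*-V^\pi$ to the $Q$-approximation error, together with the optimality of $\pi$ with respect to $Q$. Fix any state $s$; I would write
\[
V^*(s)-V^\pi(s) \;=\; Q^*(s,\pi^*(s)) - Q^\pi(s,\pi(s))
\]
and insert the intermediate terms $Q(s,\pi^*(s))$, $Q(s,\pi(s))$, and $Q^*(s,\pi(s))$ so that the difference splits into four pieces that each have a clear interpretation.

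Next I would bound the four pieces. The pair $Q^*(s,\pi^*(s)) - Q(s,\pi^*(s))$ and $Q(s,\pi(s)) - Q^*(s,\pi(s))$ are each at most $\epsilon$ in absolute value by the hypothesis $\|Q^*-Q\|_\infty\le\epsilon$. The piece $Q(s,\pi^*(s)) - Q(s,\pi(s))$ is non-positive because $\pi$ is defined to be greedy with respect to $Q$, so this term can be dropped. The remaining piece $Q^*(s,\pi(s)) - Q^\pi(s,\pi(s))$ unfolds via the Bellman equations for $Q^*$ and $Q^\pi$ into $\gamma\,\mathbb{E}_{s'\sim P(\cdot\mid s,\pi(s))}[V^*(s')-V^\pi(s')]$, which is at most $\gamma\,\|V^*-V^\pi\|_\infty$.

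Combining these pieces yields the pointwise inequality
\[
V^*(s)-V^\pi(s)\;\le\;2\epsilon+\gamma\,\|V^*-V^\pi\|_\infty.
\]
Taking the supremum over $s$ on the left-hand side gives $(1-\gamma)\|V^*-V^\pi\|_\infty\le 2\epsilon$, hence $\|V^*-V^\pi\|_\infty\le \tfrac{2\epsilon}{1-\gamma}$. Finally, since $V^*$ is the value of the optimal policy, $V^\pi\le V^*$ everywhere, so the one-sided bound upgrades to a bound on the absolute value, matching the statement of the lemma.

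There is no real obstacle here: the argument is entirely algebraic, and the only subtlety is choosing the correct intermediate terms so that the greedy property of $\pi$ kills the sign of the middle term. The mild care needed is simply to verify that $Q^\pi$ (the true value of the greedy policy $\pi$ with respect to the approximate $Q$) satisfies its own Bellman equation, which is standard, so that the recursive $\gamma$-contraction step goes through and a single application of the sup-norm contraction closes the recursion.
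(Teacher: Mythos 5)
Your proof is correct. Note that the paper itself states this lemma without proof, remarking only that it is ``widely used in $Q$ learning literature,'' so there is no in-paper argument to compare against; your telescoping decomposition through $Q(s,\pi^*(s))$, $Q(s,\pi(s))$, and $Q^*(s,\pi(s))$, using greediness to discard the middle term and the Bellman equations to obtain the $\gamma$-contraction, is exactly the standard Singh--Yee style argument that the authors are implicitly invoking, and it even yields the slightly sharper constant $\tfrac{2\gamma\epsilon}{1-\gamma}$ if one keeps the factor of $\gamma$ from the recursive step, which of course implies the stated bound.
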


\section{Proofs in Section \ref{sec:theory}}
In this section, we include the full proofs of the three main theorems and lemmas in the main body of paper. For the completeness and convenience of reading, we also include the lemmas and proofs that are stated in the main body. 
\subsection{Laplacian Eigenvalues and Stationary Distribution}

To prove theorem \ref{thm:laplacian}, we introduce a useful lemma from \citet{chung2006diameter} which bounds the convergence of lazy random walk (random walk with additional 0.5 probability that will stay in the same state) over a directed graph $G$. Then we will relate the lazy random walk transition matrix with the one-way commute time of random walk over $G$.

\begin{lemma}
\label{lem:lazywalk}
Suppose a strongly connected weighted directed graph has transition matrix $P$, and a lazy random walk transition $\mathcal{P} = \frac{(I+P)}{2}$. For any state $u,v$ and $k>0$, the normalized matrix $M = \Phi^{1/2}\mathcal{P}\Phi^{-1/2}$ satisfies:
$$ \left| M^{k}(u,v) - \sqrt{\phi(u)\phi(v)} \right| \le (1-\lambda/2)^{k/2} $$
\end{lemma}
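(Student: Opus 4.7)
The plan is to reduce the entrywise bound to an operator-norm contraction on the subspace orthogonal to $\sqrt{\phi}$, and then to exploit the specific form of the \emph{lazy} walk together with the Rayleigh-quotient characterization of $\lambda$ to get the per-step factor $1-\lambda/2$.

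First I would collect three elementary facts. Since $\mathcal{P}$ is stochastic with stationary distribution $\phi$, the vector $\sqrt{\phi}$ is a unit right eigenvector of $M = \Phi^{1/2}\mathcal{P}\Phi^{-1/2}$ with eigenvalue $1$, and $\sqrt{\phi}^{T}$ is the corresponding left eigenvector. With $\Pi = \sqrt{\phi}\sqrt{\phi}^{T}$, this gives $M\Pi = \Pi M = \Pi$, so $M$ preserves $\sqrt{\phi}^{\perp}$. Decomposing $e_v = \sqrt{\phi(v)}\sqrt{\phi} + w$ with $w \in \sqrt{\phi}^{\perp}$ and $\|w\|_2^{2} = 1-\phi(v) \le 1$, I get
\[
M^{k}(u,v) - \sqrt{\phi(u)\phi(v)} \;=\; e_u^{T}\bigl(M^{k}e_v - \sqrt{\phi(v)}\sqrt{\phi}\bigr) \;=\; e_u^{T} M^{k} w,
\]
so by Cauchy--Schwarz it suffices to prove $\|M^{k}w\|_2 \le (1-\lambda/2)^{k/2}\|w\|_2$ for every $w \in \sqrt{\phi}^{\perp}$.

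The heart of the argument, and the step I expect to be the main technical hurdle, is the one-step contraction
\[
\|Mx\|_2^{2} \;\le\; (1-\lambda/2)\,\|x\|_2^{2} \qquad \text{for all } x \in \sqrt{\phi}^{\perp}.
\]
To obtain it I would write $M = \tfrac12(I+\widetilde M)$ with $\widetilde M = \Phi^{1/2}P\Phi^{-1/2}$ and expand
\[
\|Mx\|_2^{2} \;=\; \tfrac14\bigl(\|x\|_2^{2} + 2\langle x,\widetilde M x\rangle + \|\widetilde M x\|_2^{2}\bigr).
\]
A short Jensen computation using $\sum_s \phi(s)P(s,\cdot) = \phi(\cdot)$ (stationarity of $\phi$ for $P$) yields the sub-stochastic estimate $\|\widetilde M x\|_2^{2} \le \|x\|_2^{2}$ for every $x$. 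For the cross term, symmetry of the real inner product gives $\langle x,\widetilde M x\rangle = \langle x,\widetilde A x\rangle$ with $\widetilde A = (\widetilde M + \widetilde M^{*})/2 = I - \mathcal{L}$, and the Rayleigh-quotient characterization of $\lambda$ as the smallest non-zero eigenvalue of the self-adjoint $\mathcal{L}$ (whose kernel is spanned by $\sqrt{\phi}$) then gives $\langle x,\mathcal{L}x\rangle \ge \lambda\|x\|_2^{2}$ on $\sqrt{\phi}^{\perp}$, hence $\langle x,\widetilde A x\rangle \le (1-\lambda)\|x\|_2^{2}$. Plugging the two bounds into the expansion collapses the constant to exactly $1-\lambda/2$.

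Finally I would iterate. Because $\sqrt{\phi}^{T}M = \sqrt{\phi}^{T}$, each image $M^{j}w$ remains in $\sqrt{\phi}^{\perp}$, so the one-step contraction applies at every step and yields $\|M^{k}w\|_2^{2} \le (1-\lambda/2)^{k}\|w\|_2^{2}$; combined with the Cauchy--Schwarz reduction above this produces the stated entrywise bound. The place where laziness is genuinely used is the averaging with $I$: without it, $\widetilde M$ could have a rotational or period-two component of norm close to $1$ even when its Hermitian part has small norm, and the Rayleigh-quotient bound on $\langle x,\widetilde A x\rangle$ would fail to translate into a contraction in the $\ell_2$ norm. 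I expect that this averaging step is the one subtle point of the proof; the other manipulations are routine.
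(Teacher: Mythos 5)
Your proof is correct. The paper does not prove this lemma itself---it imports it as part of Theorem 1 of \citet{chung2006diameter}---and your argument (splitting $e_v$ into its component along $\sqrt{\phi}$ and a remainder in $\sqrt{\phi}^{\perp}$, then establishing the one-step contraction $\|Mx\|_2^2 \le (1-\lambda/2)\|x\|_2^2$ on $\sqrt{\phi}^{\perp}$ via $\|\widetilde{M}\|_2 \le 1$ and the Rayleigh-quotient bound for $\mathcal{L}$) is essentially the proof given in that reference.
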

This is part of the result in the Theorem 1 from \citet{chung2006diameter}.

\begin{corollary}
\label{col:lazywalk}
$ \mathcal{P}^k(u,v) \ge  \phi(v) - \sqrt{\frac{\phi(v)}{\phi(u)}}(1-\lambda/2)^{k/2}$
\end{corollary}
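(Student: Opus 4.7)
The plan is to derive the corollary as a direct algebraic consequence of Lemma \ref{lem:lazywalk} by inverting the normalization that defines $M$. Since $M = \Phi^{1/2}\mathcal{P}\Phi^{-1/2}$ is a similarity transform of $\mathcal{P}$, powers satisfy $M^k = \Phi^{1/2}\mathcal{P}^k\Phi^{-1/2}$, so entrywise
$$M^k(u,v) = \sqrt{\phi(u)}\,\mathcal{P}^k(u,v)\,\frac{1}{\sqrt{\phi(v)}}.$$
Solving for $\mathcal{P}^k(u,v)$ gives $\mathcal{P}^k(u,v) = \sqrt{\phi(v)/\phi(u)}\,M^k(u,v)$.

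Next I would apply the one-sided bound contained in Lemma \ref{lem:lazywalk}. The absolute-value inequality $|M^k(u,v) - \sqrt{\phi(u)\phi(v)}| \le (1-\lambda/2)^{k/2}$ immediately yields the lower bound $M^k(u,v) \ge \sqrt{\phi(u)\phi(v)} - (1-\lambda/2)^{k/2}$. Multiplying through by the positive factor $\sqrt{\phi(v)/\phi(u)}$ and simplifying $\sqrt{\phi(v)/\phi(u)}\cdot\sqrt{\phi(u)\phi(v)} = \phi(v)$ produces exactly
$$\mathcal{P}^k(u,v) \ge \phi(v) - \sqrt{\frac{\phi(v)}{\phi(u)}}\,(1-\lambda/2)^{k/2},$$
which is the claim. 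There is no substantive obstacle; the only things to check are that $\phi(u),\phi(v) > 0$ (guaranteed by strong connectivity/irreducibility, which is the standing assumption for the lemma) so that the division and square roots are well defined, and that the similarity relation $M^k = \Phi^{1/2}\mathcal{P}^k\Phi^{-1/2}$ follows from a telescoping cancellation of $\Phi^{-1/2}\Phi^{1/2}$ between consecutive factors.
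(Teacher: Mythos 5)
Your proposal is correct and mirrors the paper's own argument exactly: both extract the one-sided lower bound on $M^k(u,v)$ from Lemma \ref{lem:lazywalk}, use the similarity relation $M^k = \Phi^{1/2}\mathcal{P}^k\Phi^{-1/2}$ to write $\mathcal{P}^k(u,v) = \sqrt{\phi(v)/\phi(u)}\,M^k(u,v)$, and simplify. Your added remarks about positivity of $\phi$ and the telescoping of $\Phi^{-1/2}\Phi^{1/2}$ are fine but not needed beyond what the paper already assumes.
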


\begin{proof}
We have that
$$ M^{k}(u,v) \ge \sqrt{\phi(u)\phi(v)} - (1-\lambda/2)^{k/2} $$
from lemma \ref{lem:lazywalk}. Since $M^k = \Phi^{1/2}\mathcal{P}^k\Phi^{-1/2}$. Then 
$$ \mathcal{P}^{k}(u,v) = \frac{1}{\sqrt{\phi(u)}}M^{k}(u,v) \sqrt{\phi(v)} \ge  \phi(v) - \sqrt{\frac{\phi(v)}{\phi(u)}}(1-\lambda/2)^{k/2} $$
%$$ d_{G}(u,v) \le \left\lfloor \frac{2\ln(\sqrt{\phi(u)\phi(v)})}{\ln(\frac{2}{2-\lambda})}\right\rfloor + 1 $$
\end{proof}
Now we could bound $\mathcal{P}^{k}(u,v)$ by the graph Laplacian properties. The next lemma shows that $\mathcal{P}^{k}(u,v)$ is a lower bound of the probability of reaching $v$ from $u$ under random walk over $G$.

\begin{lemma}
\label{lem:walkprob}
Suppose a strongly connected weighted directed graph has transition matrix $P$, and a lazy random walk transition $\mathcal{P} = \frac{(I+P)}{2}$. The the probability of reaching $v$ from $u$ within $k$ steps by original random walk will be at least $\mathcal{P}^k(u,v)$.
\end{lemma}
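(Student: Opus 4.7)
\textbf{Proof plan for Lemma \ref{lem:walkprob}.}

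The plan is to expand $\mathcal{P}^k$ binomially and compare termwise with the first-hitting distribution of the non-lazy walk. Since $\mathcal{P} = \tfrac{1}{2}(I + P)$, the matrices $I$ and $P$ commute, so by the binomial theorem
\begin{equation*}
\mathcal{P}^k \;=\; \frac{1}{2^k}\sum_{j=0}^{k}\binom{k}{j}P^{j}.
\end{equation*}
Probabilistically, this just says that the lazy walk run for $k$ steps makes a $\mathrm{Binomial}(k,1/2)$ number $J$ of ``real'' moves, and conditional on $J=j$ its position is distributed as $j$ steps of the original walk. So $\mathcal{P}^k(u,v) = \mathbb{E}\bigl[P^{J}(u,v)\bigr]$.

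Next I would relate $P^{j}(u,v)$ to the hitting probability. Let $\tau_v = \inf\{t \ge 1 : X_t = v\}$ for the original chain started at $u$. Decomposing by the first visit to $v$,
\begin{equation*}
P^{j}(u,v) \;=\; \Pr(X_j = v \mid X_0 = u) \;=\; \sum_{i=1}^{j}\Pr(\tau_v = i)\,P^{j-i}(v,v)\;\le\;\sum_{i=1}^{j}\Pr(\tau_v = i)\;=\;\Pr(\tau_v \le j),
\end{equation*}
since $P^{j-i}(v,v)\le 1$. In particular, for every $j \le k$ we have $P^{j}(u,v) \le \Pr(\tau_v \le k)$.

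Combining the two observations,
\begin{equation*}
\mathcal{P}^k(u,v) \;=\; \frac{1}{2^k}\sum_{j=0}^{k}\binom{k}{j}P^{j}(u,v)
\;\le\; \Pr(\tau_v \le k)\cdot\frac{1}{2^k}\sum_{j=0}^{k}\binom{k}{j}
\;=\; \Pr(\tau_v \le k),
\end{equation*}
which is exactly the probability that the original random walk from $u$ reaches $v$ within $k$ steps. The $j=0$ term needs a small sanity check (it contributes only when $u=v$, in which case $\Pr(\tau_v\le k)=1$ trivially if we count $X_0=v$, or the inequality is even looser otherwise). The only real subtlety is the first-passage decomposition and ensuring the argument works for directed (non-reversible) chains; but since that decomposition only uses the Markov property at the hitting time, reversibility is not needed, which is precisely what we want in order to apply the lemma to the directed graph $G(P^\pi_{RW})$ in Theorem \ref{thm:laplacian}.
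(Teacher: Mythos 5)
Your proof is correct for the case that matters ($u \neq v$), but it takes a genuinely different and noticeably cleaner route than the paper. The paper argues at the level of sample paths: it decomposes each length-$k$ lazy-walk path into the first-visit segment $\widehat{l}_{uv}$, the lazy (self-loop) steps, and a $v$-to-$v$ tail $\widehat{l}_{vv}$, groups all lazy paths sharing the same $\widehat{l}_{uv}$, counts the $\binom{k}{i}$ placements of the lazy steps, bounds the tail contribution by $P^{k-t-i}(v,v)\le 1$, and then needs an extra path-splitting argument to handle self-loops already present in $P$. You instead use the algebraic identity $\mathcal{P}^k = 2^{-k}\sum_{j=0}^{k}\binom{k}{j}P^{j}$ (equivalently, conditioning on the Binomial number of non-lazy moves) together with the first-entrance decomposition $P^{j}(u,v)=\sum_{i=1}^{j}\Pr(\tau_v=i)P^{j-i}(v,v)\le \Pr(\tau_v\le j)$, which packages the paper's "tail probability $\le 1$" and "$\binom{k}{i}$ lazy placements" steps into two standard facts, handles self-loops in $P$ with no extra work, and, like the paper's argument, uses nothing beyond the Markov property, so reversibility is indeed not needed. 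The paper's path-level argument buys nothing extra here; your version proves the same inequality with less bookkeeping.

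One small caveat on your $j=0$ aside: the claim that for $u=v$ "the inequality is even looser otherwise" is not right. If $u=v$ and one insists on a genuine return (first hit at time $\ge 1$), the stated inequality can fail — e.g.\ a two-state chain where the return probability to $v$ within $k$ steps is tiny while $\mathcal{P}^k(v,v)\ge 2^{-k}$. The lemma should be read (as the paper implicitly does, since its first-visit paths require $s_i\neq v$ for $i<t$) either as restricted to $u\neq v$ or with "already at $v$" counting as reached, in which case the $u=v$ case is trivially true with probability $1$; under that reading your argument is complete.
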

\begin{proof}
For simplicity of discussion, we firstly assume that $P_{i,i} = 0$ for any $i$, which means there is no self loop in the original random walk. At the end of proof, we will show that how this proof still works for the case with self loop.

Define $F(u,v;k)$ as the probability of reaching $v$ from $u$ within $k$ steps by original random walk. Let $l = (s_0 = u, s_1, ... , s_t = v)$ be a path from $u$ to $v$ with length $0 < t \le k$, and for all $i<t$, $s_i \neq v$. We call this kind of path \textit{first-visit} path. Then we could compute $F(u,v;k)$ by sum the probability over all first-visit path. Let $\mathcal{L}_{uv}$ be the set of all first-visit paths from $u$ to $v$ with length $0 < t \le k$.
\begin{equation*}
F(u,v;k) = \sum_{l \in \mathcal{L}_{uv}} Pr(l|\text{r.w.}) = \sum_{l \in \mathcal{L}_{uv}} \prod_{i=0}^{t-1}P(s_i,s_{i+1}), 
\end{equation*}
where the sum is over all distinct first-visit path with length less than $k$.

Note that $\mathcal{P}^k(u,v)$ is the probability of reaching $u$ from $v$ at $k$th step by lazy random walk. Let $\mathcal{L}$ be the set of paths with length of $k$ in the lazy random walk graph, whose transition weight matrix is $\mathcal{P}$ but not $P$. Then
$$
\mathcal{P}^k(u,v) = \sum_{\widehat{l}\in \mathcal{L}}Pr(\widehat{l}|\text{lazy r.w.})
= \sum_{\widehat{l}\in\mathcal{L}} \prod_{i=0}^{k-1}\mathcal{P}(\widehat{s}_i,\widehat{s}_{i+1}) 
$$

$\widehat{l}$ in $\mathcal{L}$ may not be a first-visit path, since there are lazy steps as well as extra steps after first visit. Now we will divide $\widehat{l}$ into three disjoint part and extract the first-visit part in $\widehat{l}$. Firstly we find the first visit of $v$ in $\widehat{l}$, and let $\widehat{l}_{uv}$ be all the steps in $\widehat{l}$ from $u$ to the first visit to $v$ without all lazy steps. Since the lazy steps are self loop, $\widehat{l}_{uv}$ is still a valid path. Let the length of $\widehat{l}_{uv}$ be $t \le k$, and the number of all lazy steps in path $\widehat{l}$ be $i(\widehat{l})$. Then the rest steps in $\widehat{l}$ is a path from $v$ to $v$ with length of $k-t-i$. Let this path be $\widehat{l}_{vv}$. Note that for all $\widehat{l}$, $\widehat{l}_{uv}$'s are first-visit paths with length no greater than $k$, and they cover all first-visit paths with length no greater than $k$. $\widehat{l}_{vv}$'s are a valid paths from $v$ to $v$ with length $k-t-i$, and they cover all paths from $v$ to $v$ with length $k-t-i$.

Now the problem is there might be more than one path $\widehat{l}$ with the same $\widehat{l}_{uv}$. We need to prove that $\mathcal{P}^k(u,v)$ does not count it more than one, which means for these $\widehat{l}$ with the same $\widehat{l}_{uv}$, $$\sum_{\widehat{l}}Pr(\widehat{l}|\text{lazy random walk}) \le Pr(\widehat{l}_{uv}|\text{random walk})$$ 
To prove it, let $\mathcal{L}(\widehat{l}_{uv})$ be the set of all $\widehat{l}$ with the same $\widehat{l}_{uv}$:
\begin{eqnarray*}
\sum_{\widehat{l} \in \mathcal{L}(\widehat{l}_{uv})}Pr(\widehat{l}|\text{lazy r.w.}) 
&=& \sum_{i=0}^{k-t} \sum_{\widehat{l} \text{ s.t. $i(\widehat{l})=i$}}Pr(\widehat{l}|\text{lazy r.w.}) \\
&=& \sum_{i=0}^{k-t} \sum_{\widehat{l} \text{ s.t. $i(\widehat{l})=i$}} \frac{1}{2^k} Pr(\widehat{l}_{uv}|\text{r.w.})  Pr(\widehat{l}_{vv}|\text{r.w.}) \\
&=& \sum_{i=0}^{k-t} Pr(\widehat{l}_{uv}|\text{r.w.}) \sum_{\widehat{l} \text{ s.t. $i(\widehat{l})=i$}} \frac{1}{2^k} Pr(\widehat{l}_{vv}|\text{r.w.}) \\
&=& \sum_{i=0}^{k-t} \frac{1}{2^k} \binom{k}{i} Pr(\widehat{l}_{uv}|\text{r.w.}) \sum_{\widehat{l}_{vv}, \, |\widehat{l}_{vv}|=k-t-i} Pr(\widehat{l}_{vv}|\text{r.w.}) \\
&=&\sum_{i=0}^{k-t} \frac{1}{2^k} \binom{k}{i} Pr(\widehat{l}_{uv}|\text{r.w.}) P^{k-t-i}(v,v) \\
&\le&\sum_{i=0}^{k-t} \frac{1}{2^k} \binom{k}{i} Pr(\widehat{l}_{uv}|\text{r.w.}) \\
&\le& Pr(\widehat{l}_{vv}|\text{r.w.})
\end{eqnarray*} 
By dividing of $\widehat{l}$ according to the value of $i$, we have the first steps. The second step follows from dividing the path $\widehat{l}$ into three parts: $\widehat{l}_{uv}, \widehat{l}_{vv}$, and the self-loop part. Note that for a step $(s,s')$ in $\widehat{l}$, $\mathcal{P}(s,s') = \frac{1}{2}$ for lazy self-loop steps and $\mathcal{P}(s,s')=\frac{P(s,s')}{2}$ for the other steps. The third step follows from that $Pr(\widehat{l}_{uv}|\text{r.w.})$ is a constant since $\widehat{l}_{uv}$ is fixed. For a fixed $i$ and fixed $\widehat{l}_{vv}$, there is $\binom{k}{i}$ different $\widehat{l}$, since there is $\binom{k}{i}$ possible combinations of lazy steps. By taking the some over these lazy steps combinations with a fixed $\widehat{l}_{vv}$, we have the fourth step. The fifth step follows from the fact that if we take sum of probability over all possible $k-t-i$ steps path from $v$ to $v$, then that is the probability of visiting $v$ from $v$ at $k-t-i$ steps. Since it is a valid probability, it is no greater than 1 and yields the sixth step. By substituting the result above into the expression of $F(u,v;k)$, we have that:
\begin{eqnarray*}
\mathcal{P}^k(u,v) &=& \sum_{\widehat{l} \in \mathcal{L}}Pr(\widehat{l}|\text{lazy r.w.}) = \sum_{\widehat{l}_{uv} \in \mathcal{L}_{uv}} \sum_{\widehat{l} \in \mathcal{L}(\widehat{l}_{uv})}Pr(\widehat{l}|\text{lazy r.w.}) \le \sum_{\widehat{l}_{uv} \in \mathcal{L}_{uv}} Pr(\widehat{l}_{uv}|\text{r.w.})
\end{eqnarray*}
The last line is exactly $F(u,v;k)$, completing the proof.

Now consider the case that there exist self loops in the original transition matrix $P$. In that case, we can split the self loops in $P$ from the self loops in $I$. For example, if there is a path in lazy random walk $\widehat{l} = (\widehat{s}_0, \dots, \widehat{s}_i = s, \widehat{s}_{i+1} = s, \widehat{s}_k )$. In the original path $Pr(\widehat{s}_{i+1} = s|\widehat{s}_{i}=s) = (P(s,s)+1)/2$. We can split this path into two exactly same path: $\widehat{l}_1$ and $\widehat{l}_{2}$. In $\widehat{l}_1$, $Pr(\widehat{s}_{i+1} = s|\widehat{s}_{i}=s) = P(s,s)/2$, and this transition step is part of the sub-path $\widehat{l}_{uv}$. In $\widehat{l}_2$, $Pr(\widehat{s}_{i+1} = s|\widehat{s}_{i}=s) = 1/2$, and this transition step is part of the lazy steps. This decomposition does not change the probability under lazy random walk since $Pr(\widehat{l}|\text{lazy r.w.}) = Pr(\widehat{l}_1|\text{lazy r.w.}) + Pr(\widehat{l}_2|\text{lazy r.w.})$. Thus the analysis for no self loop case works for $\widehat{l}_1$ and $\widehat{l}_2$, and we finish the proof for all transition matrix $P$ cases.
%$= (\widehat{s}_0, \dots, \widehat{s}_i = s, \widehat{s}_{i+1} = s, \widehat{s}_k )$
\end{proof}
Combining this result with corollary \ref{col:lazywalk}, we immediately have the following result:
\begin{corollary}
\label{col:commuteprob}
For any two states $u$, $v$, the probability of reaching $v$ from $u$ within $k$ steps is at least $\phi(v) - \sqrt{\frac{\phi(v)}{\phi(u)}}(1-\lambda/2)^{k/2}$.
\end{corollary}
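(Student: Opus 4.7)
The plan is to obtain the stated hitting-probability bound by chaining the two ingredients already developed in this subsection: the comparison between the lazy random walk and the original random walk (Lemma \ref{lem:walkprob}), and the spectral convergence bound for the lazy walk (Corollary \ref{col:lazywalk}). Both inequalities point in the same direction and refer to the same pair $(u,v)$, so the composition is immediate once they are lined up.

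First I would apply Lemma \ref{lem:walkprob}: the probability $F(u,v;k)$ of reaching $v$ from $u$ within $k$ steps under the original random walk with transition matrix $P$ is at least the $(u,v)$-entry of $\mathcal{P}^{k}$, where $\mathcal{P} = (I+P)/2$ is the lazy random walk. This is the step that does real work, because the lazy walk and the original walk are different stochastic processes; Lemma \ref{lem:walkprob} has already absorbed that subtlety by decomposing a lazy-walk trajectory into a first-visit subpath of the original walk, an independent count of lazy self-loops, and a tail path, and showing that the combinatorial factor $\binom{k}{i}/2^{k}$ summed against $P^{k-t-i}(v,v)$ does not exceed $1$.

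Next I would invoke Corollary \ref{col:lazywalk}, which asserts
$$\mathcal{P}^{k}(u,v) \ge \phi(v) - \sqrt{\frac{\phi(v)}{\phi(u)}}\,(1-\lambda/2)^{k/2}.$$
This in turn rests on Lemma \ref{lem:lazywalk} and on the similarity $M^{k} = \Phi^{1/2}\mathcal{P}^{k}\Phi^{-1/2}$, so that the normalized deviation $|M^{k}(u,v) - \sqrt{\phi(u)\phi(v)}|$ translates, after multiplying by $\sqrt{\phi(v)/\phi(u)}$, into an additive lower bound on $\mathcal{P}^{k}(u,v)$ centered at $\phi(v)$.

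Combining the two inequalities gives
$$F(u,v;k) \;\ge\; \mathcal{P}^{k}(u,v) \;\ge\; \phi(v) - \sqrt{\frac{\phi(v)}{\phi(u)}}\,(1-\lambda/2)^{k/2},$$
which is precisely the statement of Corollary \ref{col:commuteprob}. There is no substantive obstacle at this stage, since the hard combinatorial argument was carried out inside Lemma \ref{lem:walkprob} and the hard spectral argument inside Lemma \ref{lem:lazywalk}; the only thing to check is that the direction of both inequalities is consistent and that the bound is meaningful (it can be negative, in which case the statement is trivial, and it becomes useful once $k$ is large enough that $(1-\lambda/2)^{k/2} < \sqrt{\phi(u)}\,\phi(v)/\sqrt{\phi(v)} = \sqrt{\phi(u)\phi(v)}$).
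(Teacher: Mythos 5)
Your proof is correct and is exactly the paper's argument: the paper also obtains this corollary by chaining Lemma \ref{lem:walkprob} (the reaching probability under the original walk dominates $\mathcal{P}^{k}(u,v)$) with Corollary \ref{col:lazywalk} (the spectral lower bound on $\mathcal{P}^{k}(u,v)$). Nothing further is needed.
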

By setting the time steps $k$ large enough, we could lower bound the one-way commute probability by the stationary distribution:

\begin{corollary}
\label{col:commuteprobk}
For any two state $u$, $v$, the probability of reaching $v$ from $u$ within $k$ steps is at least $\phi(v)/2$, for any $k \ge k_0 = \frac{2\ln\left(2/\phi_{\min}\right)}{\ln(\frac{2}{2-\lambda})} + 1$, where $\phi_{\min}$ is $\min_{x \in \mathcal{S}} \phi(x)$.
\end{corollary}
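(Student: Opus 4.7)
The plan is to derive Corollary \ref{col:commuteprobk} as a direct algebraic consequence of Corollary \ref{col:commuteprob}, by solving for the smallest $k$ that forces the error term in the latter bound to drop below $\phi(v)/2$.

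Starting from Corollary \ref{col:commuteprob}, the probability of reaching $v$ from $u$ within $k$ steps is at least $\phi(v) - \sqrt{\phi(v)/\phi(u)}\,(1-\lambda/2)^{k/2}$. I would rearrange the inequality $\phi(v) - \sqrt{\phi(v)/\phi(u)}\,(1-\lambda/2)^{k/2} \ge \phi(v)/2$ into the equivalent requirement
$$(1-\lambda/2)^{k/2} \le \tfrac{1}{2}\sqrt{\phi(v)\phi(u)}.$$
Since both $\phi(u)$ and $\phi(v)$ are at least $\phi_{\min}$, a sufficient (state-independent) condition is $(1-\lambda/2)^{k/2} \le \phi_{\min}/2$. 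This is the key reduction: it removes the dependence on the specific pair $(u,v)$ and reduces the task to a one-variable inequality in $k$.

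Next I would take logarithms. Because $(1-\lambda/2) \in (0,1)$, rewriting $1-\lambda/2 = 1/\bigl(\tfrac{2}{2-\lambda}\bigr)$ and taking logs yields
$$\frac{k}{2}\ln\!\Bigl(\tfrac{2}{2-\lambda}\Bigr) \ge \ln\!\Bigl(\tfrac{2}{\phi_{\min}}\Bigr),$$
that is, $k \ge 2\ln(2/\phi_{\min})/\ln\bigl(2/(2-\lambda)\bigr)$. The statement adds an extra $+1$ to the threshold as a harmless integer safety margin, so any $k \ge k_0$ satisfies the inequality and hence gives the desired lower bound $\phi(v)/2$ on the reaching probability.

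There is essentially no hard step: the argument is a one-line deduction from Corollary \ref{col:commuteprob} followed by logarithmic manipulation. The only point requiring minor care is the slight weakening $\sqrt{\phi(u)\phi(v)} \ge \phi_{\min}$, which is sound and explains why the bound in $k_0$ depends only on $\phi_{\min}$ rather than on the pair $(u,v)$.
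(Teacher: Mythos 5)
Your proposal is correct and follows essentially the same route as the paper: both deduce the result from Corollary \ref{col:commuteprob} by requiring $(1-\lambda/2)^{k/2} \le \tfrac{1}{2}\sqrt{\phi(u)\phi(v)}$, weaken $\sqrt{\phi(u)\phi(v)} \ge \phi_{\min}$ to obtain a pair-independent threshold, and solve for $k$ by taking logarithms (the paper phrases this as substituting the pair-specific $k$ and noting $k_0$ dominates it, with the $+1$ absorbing the integer rounding). Your write-up is, if anything, slightly more explicit about the monotonicity of the error term in $k$.
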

\begin{proof}
By substitute $k$ with $\left\lfloor \frac{2\ln\left(2/\sqrt{\phi(u)\phi(v)}\right)}{\ln\left(\frac{2}{2-\lambda}\right)} \right\rfloor $ in corollary \ref{col:commuteprob}, we have that the probability is bounded by $\phi(v)/2$. Since $\sqrt{\phi(u)\phi(v)}>\phi_{\min}$, $k \ge k_0 \ge \frac{2\ln\left(2/\sqrt{\phi(u)\phi(v)}\right)}{\ln\left(\frac{2}{2-\lambda}\right)}$.
\end{proof}
Now we need a high probability bound for the one way commute time $k$ between two states.
\begin{corollary}
\label{col:commutetime}
For any two state $u$, $v$, we can visit $v$ from $u$ at least $A\ln(4SA)$ time with probability $1-\frac{1}{4S}$, within $\frac{8A\ln(4SA)k_0}{\phi(v)}$ steps.
\end{corollary}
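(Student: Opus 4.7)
The plan is to partition the time horizon of length $T = \frac{8A\ln(4SA)\,k_0}{\phi(v)}$ into $T/k_0 = \frac{8N}{\phi(v)}$ consecutive blocks of length $k_0$, where $N = A\ln(4SA)$, and then count in how many of these blocks the state $v$ is visited at least once. By Corollary~\ref{col:commuteprobk}, \emph{no matter what state we are in at the start of a block}, the probability of reaching $v$ within the next $k_0$ steps is at least $\phi(v)/2$. Letting $X_i \in \{0,1\}$ indicate whether $v$ is visited during block $i$, and $\mathcal{F}_{i-1}$ the filtration generated by the first $i-1$ blocks, this says
$$\mathbb{E}[X_i \mid \mathcal{F}_{i-1}] \;\ge\; \phi(v)/2.$$
Thus $\sum_i X_i$ stochastically dominates a $\mathrm{Binomial}\bigl(8N/\phi(v),\;\phi(v)/2\bigr)$ random variable, whose mean is $4N$.

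Next I would apply a multiplicative Chernoff bound to this dominating binomial to show that $\sum_i X_i \ge N$ with probability at least $1 - \tfrac{1}{4S}$. Specifically, with $\mu = 4N$ and target $N = (1-\tfrac{3}{4})\mu$, the standard bound
$$\Pr\!\Bigl[\sum_i X_i < N\Bigr] \;\le\; \exp\!\Bigl(-\tfrac{\mu\,(3/4)^2}{2}\Bigr) \;=\; \exp(-9N/8) \;\le\; \exp(-N) \;=\; (4SA)^{-1} \;\le\; \tfrac{1}{4S},$$
using $N = A\ln(4SA) \ge \ln(4SA)$. Finally, since the number of distinct visits to $v$ is at least the number of blocks in which $v$ is visited, we conclude that $v$ is visited at least $N = A\ln(4SA)$ times with probability at least $1 - \frac{1}{4S}$.

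The only real subtlety is justifying the stochastic-domination step, since the $X_i$ are not independent --- the block-$i$ trajectory depends on the state at the end of block $i-1$. This is handled by the conditional bound $\mathbb{E}[X_i \mid \mathcal{F}_{i-1}] \ge \phi(v)/2$, which holds uniformly over the conditioning because Corollary~\ref{col:commuteprobk} is uniform over the starting state $u$. From this one can either couple the $X_i$'s to an i.i.d.\ $\mathrm{Bernoulli}(\phi(v)/2)$ sequence, or equivalently apply Azuma--Hoeffding to the martingale $Y_i = X_i - \mathbb{E}[X_i \mid \mathcal{F}_{i-1}]$; both routes yield the same Chernoff-type tail. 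After that the calculation is routine, and the factor $8$ in the time bound is chosen precisely so that the mean $\mu = 4N$ leaves enough slack to absorb the $\tfrac{1}{4S}$ failure probability.
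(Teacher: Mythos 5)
Your proof is correct and follows essentially the same route as the paper: partition the horizon into blocks of length $k_0$, use the uniformity of Corollary~\ref{col:commuteprobk} over the block's starting state to treat each block as a Bernoulli trial with success probability at least $\phi(v)/2$, and apply a concentration argument to obtain $A\ln(4SA)$ successes with probability $1-\frac{1}{4S}$; the paper simply invokes Lemma 56 of \citet{li2009unifying} at the step where you carry out the coupling-plus-multiplicative-Chernoff computation explicitly (and you are more careful than the paper about the dependence between blocks). One small caveat: plain Azuma--Hoeffding would only give an additive tail of order $\exp(-cN\phi(v))$, which is too weak when $\phi(v)$ is small, so the stochastic-domination/coupling route you describe (or a multiplicative martingale inequality) is the one to rely on, not the vanilla martingale alternative you mention as equivalent.
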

\begin{proof}
We know that for $k_0 = \frac{2\ln\left(2/\phi_{\min}\right)}{\ln(\frac{2}{2-\lambda})} + 1$ steps, we can visit $v$ with probability at least $\phi(v)/2$. This is a Bernoulli trial with success probability at least $\phi(v)/2$. Note that for different $u$, they are all Bernoulli trials with a same lower bound of success probability and $k$. By lemma 56 in \cite{li2009unifying}, if we do $\frac{4(A\ln(4SA)+\ln4S)}{\phi(v)}$ trials, we will have A ln (4SA) successes with probability at least $1-1/4S$. We can do such number of trials by no more than $\frac{8A\ln(4SA)k_0}{\phi(v)}$ time steps.
\end{proof}

\setcounter{theorem}{2}
\begin{theorem}
(Restated)
%\label{thm:laplacian}
The covering length of a irreducible MDP under random walk policy is at most
$$8A\ln(4SA) \left( \frac{2\ln\left(2/\min_s \phi(s)\right)}{\ln(\frac{2}{2-\lambda})} + 1 \right)  \sum_{s}\frac{1}{\phi(s)},$$
where $\phi$ is the stationary distribution vector of random walk and $\lambda$ is the smallest non-zero eigenvalue of the Laplacian of graph induced by random walk.
\end{theorem}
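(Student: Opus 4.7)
The plan is to bound the covering length by adding up the time needed to visit each state sufficiently often, then invoking the action-visitation lemma to cover each state-action pair. The single-state visit count target is naturally $A\ln(4SA)$, since Lemma~\ref{lem:coveract} says that exceeding this count guarantees every action at a state is sampled with probability at least $1 - 1/(4S)$. The single-state visit time is naturally controlled by Corollary~\ref{col:commutetime}, which tells us that from any starting state, state $s$ is visited $A\ln(4SA)$ times within $8A\ln(4SA) k_0 / \phi(s)$ steps with probability at least $1 - 1/(4S)$, where $k_0 = 2\ln(2/\phi_{\min})/\ln(2/(2-\lambda)) + 1$.

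The execution would go as follows. Starting from any state-action pair (as required by the definition of covering length), I would enumerate the states $s_1, s_2, \ldots, s_S$ in any order, and treat the exploration process as a concatenation of $S$ phases. In phase $i$, beginning from whatever state is currently occupied, apply Corollary~\ref{col:commutetime} to conclude that within at most $8A\ln(4SA) k_0 / \phi(s_i)$ additional steps, state $s_i$ is visited at least $A\ln(4SA)$ times with probability at least $1 - 1/(4S)$. Crucially, the corollary is uniform in the starting state $u$, so chaining the phases together is sound regardless of where a previous phase terminated. Summing the lengths of the $S$ phases gives a total of at most
\begin{equation*}
8A\ln(4SA)\, k_0 \sum_{s} \frac{1}{\phi(s)}
\end{equation*}
time steps, which matches the claimed bound once $k_0$ is substituted.

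The probability accounting uses two union bounds. First, over the $S$ phases, all states are visited at least $A\ln(4SA)$ times with probability at least $1 - S \cdot 1/(4S) = 3/4$. Second, conditional on this, Lemma~\ref{lem:coveract} applied to each of the $S$ states ensures every action at every state is sampled with probability at least $1 - S \cdot 1/(4S) = 3/4$. Combining the two failure modes by a union bound, the overall success probability is at least $1 - 1/4 - 1/4 = 1/2$, which is exactly the threshold in the definition of covering length.

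The main obstacle is the sequential chaining: Corollary~\ref{col:commutetime} bounds a single one-way hitting count, but the process is actually one long trajectory, so I need each phase's bound to be valid conditional on the (random) endpoint of the preceding phase. This is handled by the fact that the corollary is stated uniformly over the starting state $u$, so I can apply the strong Markov property at the end of each phase without any loss. A minor secondary issue is ensuring the initial state-action pair does not bias the argument: since the starting state is arbitrary and the first phase's bound holds from any state, the analysis goes through verbatim. Everything else in the proof is arithmetic bookkeeping of the bound from Corollary~\ref{col:commutetime} summed over states.
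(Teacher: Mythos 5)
Your proposal is correct and takes essentially the same route as the paper's own proof: both combine Corollary \ref{col:commutetime} with Lemma \ref{lem:coveract} state by state, sum the per-state time bounds $8A\ln(4SA)k_0/\phi(s)$ over all states, and union-bound the failure probabilities to reach the $1/2$ threshold in the definition of covering length. Your explicit phase-by-phase chaining (using uniformity of the corollary over the starting state) and the $1/4+1/4$ probability accounting are just a more detailed spelling-out of the paper's terse per-state $1/(2S)$ union bound, yielding the identical bound.
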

\setcounter{theorem}{13}
\begin{proof}
Firstly, by combining corollary \ref{col:commutetime} and lemma \ref{lem:coveract}, we have that with probability $1- 1/2S$, we can visit every action in state $v$ within $\frac{8A\ln(4SA)k_0}{\phi(v)}$, starting from any state. Applying this for every state $v$, we have that with probability at least $1/2$, we can cover every state action pair within $8A\ln(4SA)k_0\sum_{s}\frac{1}{\phi(s)}$ steps.
\end{proof}

This bound immediately implies a sufficient condition of a PAC RL bound as the next corollary states.
\begin{corollary}
\label{col:LaplPACcondition}
For any irreducible MDP M, let $\mathcal{L}$ be the Laplacian of the graph induced by random walk over M, $\lambda$ be the smallest non-zero eigenvalue of $\mathcal{L}$, and $\phi(s)$ be the stationary distribution over states by random walk. If:
\begin{compactenum}
\item $\frac{1}{\lambda}$ is a polynomial function of the MDP parameters, and
\item $\frac{1}{\min_s\phi(s)}$ is a polynomial function of the MDP parameters,
\end{compactenum}
then $Q$ learning with random walk exploration is a PAC RL algorithm.
\end{corollary}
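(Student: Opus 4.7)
The proof is essentially a composition of results already established in the excerpt, and the plan is to chain them together carefully while verifying that each factor introduced is polynomial in the MDP parameters.

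My plan is to start by invoking Theorem \ref{thm:laplacian} to convert the hypotheses on $\lambda$ and $\phi_{\min}$ into a polynomial bound on the covering length $L$. The covering length expression involves four factors: (i) $A\ln(4SA)$, clearly polynomial in $S,A$; (ii) $\sum_s 1/\phi(s)$, which is at most $S/\phi_{\min}$, polynomial by assumption; (iii) $\ln(2/\phi_{\min})$, which is at most linear in $\ln(1/\phi_{\min})$ and hence polynomial; and (iv) $1/\ln(2/(2-\lambda))$. The last factor is the only nontrivial piece, and I would handle it with the elementary inequality $\ln(1/(1-x)) \ge x$ for $x \in (0,1)$ applied with $x = \lambda/2$, which gives $1/\ln(2/(2-\lambda)) \le 2/\lambda$, again polynomial by assumption. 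Multiplying these factors together shows $L$ is polynomial in $S, A, 1/\lambda, 1/\phi_{\min}$.

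Next I would plug this polynomial $L$ into Theorem \ref{thm:qlearning}. Choosing any fixed $\omega \in (1/2, 1)$ (say $\omega = 2/3$ as in \citet{even2003learning}) makes both exponents $1/\omega$ and $1/(1-\omega)$ finite constants, so the stated $T_0$ is polynomial in $L, V_{\max}, 1/(1-\gamma), 1/\epsilon', \log(1/\delta)$. Using $V_{\max} = R_{\max}/(1-\gamma)$ and substituting the bound on $L$ shows that, after $T_0$ steps of random walk exploration, with probability at least $1-\delta$ we obtain $\|Q_{T_0} - Q^*\|_\infty \le \epsilon'$.

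To turn this Q-function accuracy into a PAC-MDP guarantee, I would apply Lemma \ref{lem:optimalpi} with $\epsilon' = \epsilon(1-\gamma)/2$. This yields $\|V^{\pi^*} - V^\pi\|_\infty \le \epsilon$, where $\pi$ is the greedy policy with respect to $Q_{T_0}$. The algorithm's behavior is to execute random walk actions during the exploration phase (of length $T_0$) and then the greedy policy $\pi$ thereafter; since the greedy phase is $\epsilon$-optimal, every step that is potentially sub-optimal lies in the exploration phase, so the sample complexity is bounded by $T_0$. Substituting $\epsilon' = \epsilon(1-\gamma)/2$ only introduces additional polynomial factors of $1/\epsilon$ and $1/(1-\gamma)$, preserving the polynomial dependence, which yields the PAC-MDP conclusion.

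The argument is almost entirely bookkeeping; the only step requiring any thought is the bound $1/\ln(2/(2-\lambda)) \le 2/\lambda$, and the only potential subtlety is ensuring that the $\epsilon'$-to-$\epsilon$ conversion (which introduces the $(1-\gamma)$ factor) does not destroy the polynomial dependence when combined with the existing $1/(1-\gamma)$ factors in Theorem \ref{thm:qlearning}. Since these compose as a product of finitely many polynomial factors raised to the fixed exponents $1/\omega$ and $1/(1-\omega)$, the final bound remains polynomial in $S, A, 1/\epsilon, 1/\delta, 1/(1-\gamma), 1/\lambda, 1/\phi_{\min}$, establishing the PAC-MDP property.
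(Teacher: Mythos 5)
Your proposal is correct and follows essentially the same route as the paper's proof: bound $1/\ln(2/(2-\lambda))$ by $2/\lambda$ (your inequality $\ln(1/(1-x))\ge x$ is the same as the paper's $1-1/x\le\ln x$ applied with $x=1/(1-\lambda/2)$), conclude the covering length and hence $T_0$ in Theorem \ref{thm:qlearning} are polynomial, and then switch to the greedy policy of the learned $Q$ function. You simply make explicit some steps the paper leaves implicit (the fixed choice of $\omega$, the conversion via Lemma \ref{lem:optimalpi} with $\epsilon'=\epsilon(1-\gamma)/2$), which is fine and adds no new ideas beyond the paper's argument.
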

\begin{proof}
Since $1-\frac{1}{x} \le \ln(x)$, we have that $$ \frac{1}{\ln(\frac{2}{2-\lambda})} = \frac{1}{\ln(\frac{1}{1-\lambda/2})} \le \frac{1}{1-(1-\lambda/2)} = \frac{2}{\lambda}$$ 
Since $\frac{1}{\lambda}$ and $\frac{1}{\min_s\phi(s)}$ is polynomial with MDP parameters, we have that $L$, as well as $T$ in theorem \ref{thm:qlearning} are also polynomial. Thus we achieve near optimal policy after polynomial number of mistakes if we switch to greedy policy of the learned $Q$ function after $T$ steps.
\end{proof}

\section{Other Structural Properties that Bound Covering Length}
In the proceeding sections, we have looked at problem specific bounds for exploration that depends on stationary distribution and Laplacian eigenvalue. Yet, there are MDPs that are easy to explore but not covered by this bound. While covering all these cases is beyond the objective of this work, we cover two classes of MDPs where exploration is intuitively easy.

\subsection{Action Variation}
One natural class of MDPs that exploration is easy for random walk are those where different actions at the same state have similar distribution over the next states. In that case, random walk could easily cover all the next states and may result in a very similar behavior with the best exploration policy. We capture this class of MDPs by the property action variation, which was introduced by \citet{jiang2016structural} to bound the loss of shallow planning.
\begin{definition} (Action Variation)
\footnote{It is slightly different with the action variation defined by \citet{jiang2016structural}. Their definition of action variation consider the maximum $l_1$ distance between two actions' transition vectors.}
$$ \delta_P = \max_{s} \max_{a} \left\| P(\cdot|s,a) - \frac{1}{A}\sum_{a'}P(\cdot|s,a') \right\|_1 $$
\end{definition}

We need to introduce some useful lemmas before we prove the main theorem in this section. Firstly we will define commonality between two probability distribution and a elementary fact of commonality, then include a key lemma from \cite{jiang2016structural} for completeness.

% \begin{definition}
% $$ \delta_P = \max_{s} \max_{a} \left\| P(\cdot|s,a) - \frac{1}{A}\sum_{a'}P(\cdot|s,a') \right\| $$
% \end{definition}

\begin{definition}
Given two vectors p, q of the same dimension, define comm(p,q) as the commonality vector of p and q, with entries comm(s;p,q) = $\min \{p(s),q(s)\}$.
\end{definition}

\begin{proposition}
\label{prop:comm}
$$ \| \mathrm{comm}(p,q) \|_1 = 1-\| p-q\|_1/2 $$
\end{proposition}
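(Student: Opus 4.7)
The plan is to prove this via the elementary pointwise identity $\min(a,b) = \tfrac{a+b - |a-b|}{2}$. Since $p$ and $q$ arise as probability distributions in the paper's context (so $\|p\|_1 = \|q\|_1 = 1$, and all entries are nonnegative), the commonality vector $\mathrm{comm}(p,q)$ has nonnegative entries, hence $\|\mathrm{comm}(p,q)\|_1 = \sum_s \min(p(s), q(s))$. Applying the pointwise identity, this sum equals $\tfrac{1}{2}\sum_s \bigl(p(s) + q(s) - |p(s) - q(s)|\bigr) = \tfrac{1}{2}(\|p\|_1 + \|q\|_1 - \|p-q\|_1) = 1 - \|p-q\|_1/2$, which is the claim.

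As an alternative presentation that avoids invoking the min-identity, I would partition the coordinate set into $S_+ = \{s : p(s) \ge q(s)\}$ and $S_- = \{s : p(s) < q(s)\}$. On $S_+$ the commonality is $q(s)$ and on $S_-$ it is $p(s)$, so $\|\mathrm{comm}(p,q)\|_1 = \sum_{s \in S_+} q(s) + \sum_{s \in S_-} p(s)$. Meanwhile $\|p-q\|_1 = \sum_{s \in S_+}(p(s) - q(s)) + \sum_{s \in S_-}(q(s) - p(s))$, and because $p$ and $q$ each sum to $1$, the two halves of this split are equal, giving $\tfrac{1}{2}\|p-q\|_1 = \sum_{s \in S_+}(p(s) - q(s))$. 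Substituting $\sum_{s \in S_+} q(s) = \sum_{s \in S_+} p(s) - \tfrac{1}{2}\|p-q\|_1$ and using $\sum_{s \in S_+} p(s) + \sum_{s \in S_-} p(s) = 1$ collapses everything to $1 - \|p-q\|_1/2$.

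There is really no hard step here: the proposition is a restatement of the well-known fact that the total variation distance equals one minus the mass of the common part of two distributions. The only thing worth flagging is the implicit assumption that $p,q$ are probability vectors — if one merely had nonnegative vectors with arbitrary common mass, the statement would need to be written as $\|\mathrm{comm}(p,q)\|_1 = \tfrac{1}{2}(\|p\|_1 + \|q\|_1 - \|p-q\|_1)$, and I would briefly note this to keep the statement unambiguous. The computation itself is a one-line application of $\min(a,b) = \tfrac{a+b-|a-b|}{2}$ followed by summation, so I would present it compactly rather than laboring the partition argument.
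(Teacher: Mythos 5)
Your proof is correct. The paper states this proposition without proof, treating it as an elementary fact, and your argument via the pointwise identity $\min(a,b)=\tfrac{a+b-|a-b|}{2}$ summed over coordinates is exactly the standard justification (the identity of total variation distance with one minus the overlapping mass). Your flag about the hypothesis is also well taken: the paper's definition of $\mathrm{comm}$ is phrased for arbitrary vectors, but the identity as stated requires $\|p\|_1=\|q\|_1=1$, which holds in every place the paper invokes it (stochastic vectors in Lemma~\ref{lem:actvar} and Theorem~\ref{thm:actvar}); in the general nonnegative case the correct statement is $\|\mathrm{comm}(p,q)\|_1=\tfrac{1}{2}\left(\|p\|_1+\|q\|_1-\|p-q\|_1\right)$, as you note.
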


\begin{proposition} (lemma 1 in \cite{jiang2016structural})
\label{prop:comm_norm}
For any stochastic vector p, q and transition matrix $P_1$, $P_2$
$$ \| \mathrm{comm}(p^TP_1,q^T P_2) \|_1 \ge  \| \mathrm{comm}(\mathrm{comm}(p,q)^T P_1,\mathrm{comm}(p,q)^T P_2) \|_1 $$
\end{proposition}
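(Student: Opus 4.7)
The plan is to reduce the inequality to a componentwise fact about the $\min$ operation by isolating the ``common'' mass $c := \mathrm{comm}(p,q)$. Writing $p = c + p'$ and $q = c + q'$ with $p' := p - c \ge 0$ and $q' := q - c \ge 0$ (this is valid because $c(s) = \min\{p(s),q(s)\} \le p(s), q(s)$), we have
\[
  p^T P_1 = c^T P_1 + (p')^T P_1, \qquad q^T P_2 = c^T P_2 + (q')^T P_2.
\]
Since $P_1, P_2$ are stochastic matrices, all four row vectors on the right are entrywise nonnegative, so in each coordinate $s$ we add nonnegative quantities to $c^T P_1$ and $c^T P_2$.

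The key elementary fact is that for any nonnegative reals $a, b, x, y$, one has $\min\{a+x,\, b+y\} \ge \min\{a,b\}$, because $a+x \ge a$ and $b+y \ge b$. Applying this coordinatewise with $a = (c^T P_1)(s)$, $b = (c^T P_2)(s)$, $x = ((p')^T P_1)(s)$, $y = ((q')^T P_2)(s)$ gives
\[
  \mathrm{comm}(p^T P_1,\, q^T P_2)(s) \;\ge\; \mathrm{comm}(c^T P_1,\, c^T P_2)(s)
\]
for every $s$. Since both sides are nonnegative, summing over $s$ upgrades the pointwise inequality to the desired $\ell_1$ inequality.

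There is essentially no hard step here; the only subtlety is making sure the decomposition $p = c + p'$ has $p' \ge 0$ (which follows directly from the definition of $\mathrm{comm}$) and that $P_1, P_2$ being stochastic guarantees $(p')^T P_1$ and $(q')^T P_2$ are entrywise nonnegative, so that the min-monotonicity fact applies. Proposition \ref{prop:comm} is not needed for this argument; it will presumably be used later in conjunction with this proposition to convert $\ell_1$ norms of commonality vectors back into $\ell_1$ distances between the original distributions.
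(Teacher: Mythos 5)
Your argument is correct. The decomposition $p = \mathrm{comm}(p,q) + p'$, $q = \mathrm{comm}(p,q) + q'$ with $p', q' \ge 0$, followed by the coordinatewise monotonicity fact $\min\{a+x, b+y\} \ge \min\{a,b\}$ for $x,y \ge 0$, and summing the (nonnegative) entries to pass to the $\ell_1$ norm, gives exactly the claimed inequality with no gaps. Note that the paper itself does not prove this statement: it is imported verbatim as Lemma~1 of \citet{jiang2016structural} and used as a black box in the proof of Lemma~\ref{lem:actvar}, so your write-up supplies a self-contained proof that the paper omits. Your route is the natural elementary one, and it also makes visible that the hypothesis can be weakened: you never use row-stochasticity of $P_1, P_2$, only entrywise nonnegativity (stochasticity of $p,q$ is likewise not needed beyond $p,q \ge 0$, which guarantees nothing here since $p' = p - \mathrm{comm}(p,q) \ge 0$ holds by definition of the min). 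The only thing to keep in mind downstream is that Proposition~\ref{prop:comm}, which you correctly observe is not needed for this step, is what converts the commonality norm back into an $\ell_1$ distance in the subsequent chain of inequalities in Lemma~\ref{lem:actvar}.
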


We also need the next helping lemma which is widely used in MDP approximation analysis:
\begin{proposition}
\label{prop:holder} (lemma 2 in \cite{jiang2016structural})
Given stochastic vectors $p,q$, and a real vector $v$ with the same dimension,
$ | p^Tv - q^Tv| \le \| p-q\|_1 \max_{s,s'}|v(s) - v(s')|/2 $
\end{proposition}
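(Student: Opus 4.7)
The plan is to exploit the fact that both $p$ and $q$ are stochastic vectors to reduce the bound to a Hölder-style estimate after a clever centering of $v$. Writing the left-hand side as $|(p-q)^T v|$ and noting that $\sum_s(p(s)-q(s)) = 1-1 = 0$, the quantity $(p-q)^T v$ is invariant under shifts $v \mapsto v - c\mathbf{1}$ for any scalar $c$, because $(p-q)^T(c\mathbf{1}) = c \sum_s (p(s)-q(s)) = 0$.

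The key step is to choose the centering constant optimally. Let $v_{\max} = \max_s v(s)$ and $v_{\min} = \min_s v(s)$, and pick $c = (v_{\max}+v_{\min})/2$. Setting $\tilde v = v - c\mathbf{1}$, every coordinate satisfies $|\tilde v(s)| \le (v_{\max}-v_{\min})/2$, so $\|\tilde v\|_\infty \le \max_{s,s'}|v(s)-v(s')|/2$. The centering is the ``idea'' of the proof; without it, a naive Hölder bound would give a factor $\|v\|_\infty$ rather than the sharper range-based quantity on the right-hand side.

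Then I would finish by applying Hölder's inequality:
$$|p^T v - q^T v| = |(p-q)^T \tilde v| \le \|p-q\|_1 \,\|\tilde v\|_\infty \le \|p-q\|_1 \cdot \frac{\max_{s,s'}|v(s)-v(s')|}{2},$$
which is exactly the claim. There is no real obstacle here beyond recognizing the centering trick; the argument is a few lines, and the only subtlety is verifying that $(p-q)^T \mathbf{1} = 0$ justifies replacing $v$ by the centered vector $\tilde v$ before taking the $\ell_\infty$ norm.
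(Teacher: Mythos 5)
Your proof is correct: the centering of $v$ by $(v_{\max}+v_{\min})/2$, justified by $(p-q)^T\mathbf{1}=0$ for stochastic $p,q$, followed by H\"older's inequality, gives exactly the stated bound. The paper itself does not prove this proposition but simply cites it from \citet{jiang2016structural}, and your argument is the standard proof of that lemma, so there is nothing further to reconcile.
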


\begin{lemma}
\label{lem:actvar}
Let p and q be two stochastic vectors over $S$, $\pi$ be any policy and $\pi_{RW}$ be the random walk policy. Then
$$
\| \mathrm{comm}(p^T P^{\pi}, q^T P^{\pi_{RW}}) \|_1 \ge (1-\delta_P/2)\|\mathrm{comm}(p,q) \|_1 
$$
\end{lemma}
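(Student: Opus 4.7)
The plan is to reduce the statement to a single-distribution calculation using Proposition \ref{prop:comm_norm}, then use the definition of $\delta_P$ together with Proposition \ref{prop:comm} (the identity $\|\mathrm{comm}(x,y)\|_1 = 1 - \|x-y\|_1/2$) to extract the factor $1 - \delta_P/2$.

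Concretely, I would first apply Proposition \ref{prop:comm_norm} with $P_1 = P^\pi$ and $P_2 = P^{\pi_{RW}}$ to get
$$\|\mathrm{comm}(p^T P^\pi, q^T P^{\pi_{RW}})\|_1 \;\ge\; \|\mathrm{comm}(r^T P^\pi, r^T P^{\pi_{RW}})\|_1,$$
where $r := \mathrm{comm}(p,q)$, a nonnegative vector with total mass $c := \|r\|_1 = \|\mathrm{comm}(p,q)\|_1$. Next, since $\mathrm{comm}$ is positively homogeneous, the right-hand side equals $c \cdot \|\mathrm{comm}(\tilde r^T P^\pi, \tilde r^T P^{\pi_{RW}})\|_1$, where $\tilde r = r/c$ is stochastic (assuming $c > 0$; the $c = 0$ case is trivial). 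By Proposition \ref{prop:comm}, this commonality equals $1 - \tfrac{1}{2}\|\tilde r^T(P^\pi - P^{\pi_{RW}})\|_1$, so the whole task reduces to showing that $\|\tilde r^T(P^\pi - P^{\pi_{RW}})\|_1 \le \delta_P$ for any stochastic $\tilde r$.

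For the last bound, I would write $\tilde r^T(P^\pi - P^{\pi_{RW}}) = \sum_s \tilde r(s)\bigl(P^\pi(\cdot|s) - P^{\pi_{RW}}(\cdot|s)\bigr)$ and apply the triangle inequality to reduce to a per-state bound. For each fixed $s$, expand $P^\pi(\cdot|s) - P^{\pi_{RW}}(\cdot|s) = \sum_a \pi(a|s)\bigl(P(\cdot|s,a) - \tfrac{1}{A}\sum_{a'}P(\cdot|s,a')\bigr)$, and apply the triangle inequality again together with the definition of $\delta_P$ to obtain $\|P^\pi(\cdot|s) - P^{\pi_{RW}}(\cdot|s)\|_1 \le \delta_P$. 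Taking the convex combination over $s$ weighted by $\tilde r(s)$ preserves this bound, which is exactly what is needed.

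I do not expect any real obstacle: the lemma is essentially a bookkeeping exercise combining the two commonality propositions already quoted with the defining inequality of $\delta_P$. The only subtle point is the rescaling step that passes from $r = \mathrm{comm}(p,q)$ (not stochastic) to $\tilde r$ (stochastic) so that Proposition \ref{prop:comm} can be applied cleanly, and then undoing the scaling to recover the factor $\|\mathrm{comm}(p,q)\|_1$ on the right-hand side.
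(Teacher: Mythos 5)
Your proof is correct and follows essentially the same route as the paper's: reduce via Proposition \ref{prop:comm_norm}, rescale $\mathrm{comm}(p,q)$ to a stochastic vector to pull out the factor $\|\mathrm{comm}(p,q)\|_1$, convert commonality to an $\ell_1$ distance via Proposition \ref{prop:comm}, and bound $\|\tilde r^T(P^\pi-P^{\pi_{RW}})\|_1 \le \delta_P$ by convexity over states. Your version merely spells out the per-state bound $\|P^\pi(\cdot|s)-P^{\pi_{RW}}(\cdot|s)\|_1 \le \delta_P$ and the $c=0$ edge case, which the paper leaves implicit.
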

\begin{proof}
	\begin{eqnarray}
\| \mathrm{comm}(p^T P^{\pi}, q^T P^{\pi_{RW}}) \|_1 &\ge& \| \mathrm{comm}(\mathrm{comm}(p,q)^T P^{\pi},\mathrm{comm}(p,q)^T P^{\pi_{RW}}) \|_1 \\
&=&  \| \mathrm{comm}(p,q) \|_1 \| \mathrm{comm}(z^T P^{\pi},z^T P^{\pi_{RW}}) \|_1 \\
&=& \| \mathrm{comm}(p,q) \|_1 ( 1- \| z^T (P^{\pi}- P^{\pi_{RW}}) \|_1/2) \\
&\ge&  \| \mathrm{comm}(p,q) \|_1 ( 1- \delta_P/2)
\end{eqnarray}
The first step use proposition \ref{prop:comm_norm}. $z$ is a normalized vector of $\mathrm{comm}(p,q)$. So the second step follows from scaling. The third step follows proposition \ref{prop:comm}. Note that $l_1$ norm each row of $P^{\pi}- P^{\pi_{RW}}$ is bounded by $\delta$. The last step follows from the fact that $l_1$ norm is a convex function.
\end{proof}

The following theorem bounds the covering length in the case that either the actions have almost identical transition or the diameter is small, which implies that the necessary planning horizon is short.
\begin{theorem}
\label{thm:actvar}
For an MDP with finite diameter $D$, if $\delta_P \le \frac{2}{5D}$, then the covering length $L = O\left(DSA\ln(SA)\right)$. Thus the Q learning with random walk exploration could learn the near optimal Q function within polynomial steps.
\end{theorem}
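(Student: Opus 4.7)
The plan is to show that random walk visits any target state $s'$ within $O(D)$ steps with constant probability, and then amplify this to a covering bound by a Chernoff argument and a union bound over the $S$ states. The crucial tool is Lemma \ref{lem:actvar}, applied inside an auxiliary MDP obtained by making $s'$ absorbing. Absorbing $s'$ only changes transitions at $s'$ itself (where all actions now deterministically self-loop), so the action variation at $s'$ drops to zero and the overall $\delta_P$ does not increase; in particular the hypothesis $\delta_P \le 2/(5D)$ is preserved.

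Fix starting state $s$ and target $s'$. By the diameter assumption there is a policy $\pi^*$ with expected hitting time at most $D$, so Markov's inequality gives $p_{2D}^{\pi^*}(s') \ge 1/2$ in the absorbed MDP, where $p_k^{\pi}$ denotes the state distribution after $k$ steps from $s$. Iterating Lemma \ref{lem:actvar} $2D$ times from $p_0 = q_0 = \mathbf{1}_s$ yields
\[
\| \mathrm{comm}(p_{2D}^{\pi^*},\, q_{2D}^{\pi_{RW}}) \|_1 \;\ge\; (1-\delta_P/2)^{2D} \;\ge\; (1-1/(5D))^{2D} \;\ge\; 3/5,
\]
using Bernoulli's inequality. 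Now the key observation: since $\sum_{s'' \ne s'} p_{2D}^{\pi^*}(s'') \le 1/2$, at most $1/2$ of the commonality mass can sit on states other than $s'$, so
\[
q_{2D}^{\pi_{RW}}(s') \;\ge\; \min\!\bigl(p_{2D}^{\pi^*}(s'),\, q_{2D}^{\pi_{RW}}(s')\bigr) \;\ge\; 3/5 - 1/2 \;=\; 1/10.
\]
Unwinding the absorbing modification, this says that random walk in the original MDP visits $s'$ within $2D$ steps with probability at least $1/10$, uniformly over the starting state.

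With this per-block visit probability in hand, I partition time into consecutive blocks of $2D$ steps. The per-block visit lower bound holds regardless of each block's starting state, so a Chernoff bound gives at least $A\ln(4SA)$ visits to $s'$ within $O(DA\ln(SA))$ steps with probability at least $1 - 1/(4S)$. Lemma \ref{lem:coveract} then ensures every action at $s'$ is sampled at least once with probability $1 - 1/(4S)$. A union bound over all $s' \in \mathcal{S}$ yields covering length $L = O(DSA\ln(SA))$ with probability at least $1/2$, and plugging into Theorem \ref{thm:qlearning} gives the PAC conclusion.

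The main obstacle I anticipate is the step that converts the \emph{total} commonality bound $3/5$ into a \emph{pointwise} bound at $s'$; this is exactly where the absorbing trick earns its keep, by forcing $\pi^*$'s mass to concentrate on $s'$ so that the non-$s'$ coordinates cannot swallow the commonality. A minor subtlety is that the blocks are not literally independent (the starting state of each block depends on history), but since the $1/10$ lower bound holds uniformly in the starting state, the success counts stochastically dominate i.i.d. Bernoulli$(1/10)$ variables and the Chernoff step goes through without modification.
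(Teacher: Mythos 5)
Your proposal is correct, and while it shares the paper's overall skeleton -- make the target state absorbing, invoke the diameter plus Markov's inequality for the optimal policy, iterate Lemma \ref{lem:actvar} to keep the random walk's distribution close to the optimal policy's, then amplify per-block visit probabilities via a Chernoff-type bound, Lemma \ref{lem:coveract}, and a union bound over states -- the middle conversion step is genuinely different and cleaner. The paper plants a unit reward at the absorbing state, bounds the undiscounted $T$-step value gap $|V^{\pi^*}-V^{\pi_{RW}}|$ term by term via Proposition \ref{prop:holder}, and then needs a Taylor expansion of $(1-\delta_P/2)^T$ together with a delicate choice of constants ($c=2$, $d=3$, $T=5D$) to conclude a per-episode visit probability of $1/20$ within $5D$ steps. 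You bypass the value function entirely: since the absorbed optimal policy puts mass at least $1/2$ on $s'$ after $2D$ steps, the non-$s'$ coordinates can absorb at most $1/2$ of the commonality, so the total commonality bound $(1-\delta_P/2)^{2D}\ge 3/5$ (just Bernoulli's inequality) immediately yields the pointwise bound $q_{2D}^{\pi_{RW}}(s')\ge 1/10$. This gives a better constant (visit probability $1/10$ in $2D$ steps versus $1/20$ in $5D$ steps), avoids the Taylor-expansion bookkeeping, and you also make explicit two points the paper leaves implicit: that making $s'$ absorbing cannot increase $\delta_P$ (so the hypothesis transfers to the auxiliary MDP), and that the non-independent blocks are handled by stochastic domination since the $1/10$ lower bound is uniform over block start states.
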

\begin{proof}
Now consider a target MDP with respect to a particular state $s$, where the transition is as same as the original MDP, but state $s$ is the absorbing state and has the only unit reward. By Markov inequality and definition of diameter, the optimal policy can visit $s$ with in $cD$ steps with probability at least $(c-1)/c$ in the original MDP. Since the target MDP has the same transition with the original MDP except the state $s$, the expectation visiting time of $s$ would not change. So the undiscounted value of optimal policy in target MDP would be at least $(c-1)dD/c$ for $(c+d)D$ steps. Now let us compute the undiscounted $T$ steps value for random walk policy. Let $p$ be the distribution vector of start state, r be the reward distribution vector. (Note that the reward we defined for target MDP only depends on state.)
\begin{equation}
V^{\pi^*} - V^{\pi_{RW}} = \sum_{k=0}^{T} p^T (P^{\pi^*})^k r - \sum_{k=0}^{T} p^T (P^{\pi_{RW}})^k r = \sum_{k=0}^{T} (p^T (P^{\pi^*})^k - p^T (P^{\pi_{RW}})^k)r
\end{equation}
By using lemma \ref{lem:actvar} $k$ times, we have that 
\begin{equation}
\mathrm{comm}(p^T (P^{\pi^*})^k, p^T (P^{\pi_{RW}})^k) \ge (1-\delta/2)^k\mathrm{comm}(p,p)  = (1-\delta/2)^k 
\end{equation}
Use proposition \ref{prop:comm} to turn commonality into $l_1$ error:
\begin{equation}
\| p^T (P^{\pi^*})^k - p^T (P^{\pi_{RW}})^k \|_1 \le 2-2(1-\delta/2)^k
\end{equation}
Substitute this into the value error above:
\begin{eqnarray}
|V^{\pi^*} - V^{\pi_{RW}}| &\le& \sum_{k=0}^{T} |(p^T (P^{\pi^*})^k - p^T (P^{\pi_{RW}})^k)r| \\
&\le& \sum_{k=0}^{T} \|(p^T (P^{\pi^*})^k - p^T (P^{\pi_{RW}})^k)\|_1 \max_{s,s'}|r(s) - r(s^{'})|/2 \\
&\le& \sum_{k=0}^{T} (1-(1-\delta_P/2)^k) R_{\mathrm{max}}
\end{eqnarray}
So the value of $\pi_{RW}$ could be bounded by 
\begin{equation}
V^{\pi^*} - T + \frac{1-(1-\delta_P/2)^T}{1-(1-\delta_P/2)} \ge \frac{(c-1)dD}{c} - (c+d)D + \frac{2}{\delta_P}(1-(1-\delta_P/2)^T) 
\end{equation} 
% Let $c=2$, $d=2$, and simplify the equation above we have that $V^{\pi_{RW}} \ge D(1-4D\delta)$. On the other hand, the escape probability of random walk in $T = 4D$ steps is:
%Let $c = d = \frac{1}{2D\delta_P}$.
If $T\delta_P/2 \le 1$, then by Taylor extension we have:
\begin{eqnarray*} 
%(1-\delta_P/2)^{T} = (1-1/4cD)^{2CD} < \lim_{x \to \infty}(1-1/2x)^x = 1/\sqrt{e} 
(1-\delta_P/2)^{T} &=& 1 - \frac{T\delta_P}{2} + \frac{T(T-1)}{2}\left(\frac{\delta_P}{2}\right)^2 + \sum_{k=3}^{\infty} \frac{(-1)^k T!}{k!(T-k)!}\left(\frac{\delta_P}{2}\right)^k \\
&\le& 1 - \frac{T\delta_P}{2} + \frac{T(T-1)}{2}\left(\frac{\delta_P}{2}\right)^2 - \frac{T(T-1)(T-2)}{6}\left(\frac{\delta_P}{2}\right)^3 \\ && + \sum_{k=4}^{\infty} \frac{T!}{k!(T-k)!}\left(\frac{\delta_P}{2}\right)^k \\
&\le & 1 - \frac{T\delta_P}{2} + \frac{T(T-1)}{2}\left(\frac{\delta_P}{2}\right)^2 - \frac{T(T-1)(T-2)}{6}\left(\frac{\delta_P}{2}\right)^3 \left(1-\sum_{k=4}^\infty\frac{6}{k!} \right) \\
&\le& 1 - \frac{T\delta_P}{2} + \frac{T(T-1)}{2}\left(\frac{\delta_P}{2}\right)^2
\end{eqnarray*}
Thus
\begin{equation*}
V^{\pi^*} - T + \frac{1-(1-\delta_P/2)^T}{1-(1-\delta_P/2)} \ge T - \frac{T^2\delta_P}{4} - \left(c+\frac{d}{c}\right)D \ge \frac{3(c+d)D}{4}-\left(c+\frac{d}{c}\right)D
\end{equation*}
Let $c=2$ and $d=3$, the value above is $D/4$. We have that $ V^{\pi_{RW}} \ge D/4$. Remember that we also need $T\delta_P/2 \le 1$. Since we assume $\delta_P \le \frac{2}{5D}$ that is true for $T=5D$.
On the other hand, the probability of visit $s$ by random walk within $T$ steps is:
\begin{equation}
p_v = \sum_{k=0}^{T} Pr(\text{visit s at kth step}) \ge \sum_{k=0}^{T} Pr(\text{visit s at k step})\frac{T-k}{T} = \frac{V^{\pi_{RW}}}{T} \ge \frac{1}{20} 
\end{equation}
Every $T = 5D $ steps episode we have a constant probability to visit state $s$. Recall that at each state we uniformly draw actions. According to \ref{lem:actvar}, we need to visit a state more than $A\ln(4SA)$, so that with probability at least $1-1/4S$ we sample every action at least once. By lemma 56 in \cite{li2009unifying}, we can yield this by $O(A\ln(4SA) + \ln(4S))$ episodes with probability at least $1-1/4S$. Applying this for every state s and combine the fail probability, we have that with probability at least $1/2$, we can visit every state-action pair within $O\left(DSA\ln(SA)\right)$. That completes the proof.
\end{proof}

Note that this bound being polynomial does not imply that the stationary distribution is polynomial, since there are MDPs where actions are almost the same, but some certain states could only be achieved under exponentially small probability. Also it is obvious that the bound in \ref{thm:laplacian} is polynomial also does not imply the polynomial bound here.

There could be cases in RL applications that action variation is small. Note that action variation only measure the difference in transition dynamics, and the reward can still vary a lot in this case. In hierarchical RL domains, it is common that more than one options leads to the same goal, with different cost/reward. For example, if we want to control a robot arm to pick up a cup, there are many ways to pick up a cup that all end up with cup in the hand. Rewards can be very different here but the outcome space is the same.

\subsection{Sub Transition Matrix Norm}
Let us view an MDP from a graph perspective where actions are edges between states. If the graph is dense, then we can easily visit any states quickly, and intuitively we do not need to look ahead for too many steps to achieve a good exploration strategy. In that case, the MDP is easy to explore intrinsically and we want to get a problem specific bound for random walk exploration in this case. 

Let $P$ be the transition matrix under random walk $\pi_{RW}$, and $P_{-v,-v}$ be the sub-matrix of $P$ except column and row corresponding to the state $v$.
\begin{lemma}
\label{lem:commutebound}
For any state $v$, the one-way covering time from any state to $v$ by policy $\pi$ is bounded by:
$$  \max_{u} \mathbb{E}  \left\{ \inf\left\{ t \in \mathbb{N}: s_t = v \right\} | s_0 = u, \pi \right\} = ||(I-P^T_{-v,-v})^{-1} ||_1 $$
\end{lemma}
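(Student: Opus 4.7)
The plan is to derive a system of linear equations for the expected hitting times and solve it explicitly. Let $h(u) = \mathbb{E}[\inf\{t \in \mathbb{N}: s_t = v\} \mid s_0 = u, \pi]$ denote the expected first-hitting time of $v$ starting from $u$. By conditioning on the first step, for every $u \neq v$,
\begin{equation*}
h(u) = 1 + \sum_{w \neq v} P(u,w)\, h(w),
\end{equation*}
since the event $\{s_t = v\}$ for $t \ge 1$ is unaffected by redefining $v$ as absorbing. Stacking these equations over all $u \neq v$, let $\mathbf{h}$ be the column vector whose $u$-th entry is $h(u)$. Then $\mathbf{h} = \mathbf{1} + P_{-v,-v}\, \mathbf{h}$, i.e.
\begin{equation*}
(I - P_{-v,-v})\, \mathbf{h} = \mathbf{1}.
\end{equation*}

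Next I would argue that $I - P_{-v,-v}$ is invertible. Since the chain is irreducible (the statement of Theorem \ref{thm:laplacian} assumes this, and we are in that setting), $v$ is reached with probability $1$ from any state, which forces the spectral radius of the substochastic matrix $P_{-v,-v}$ to be strictly less than $1$. Hence the Neumann series $(I - P_{-v,-v})^{-1} = \sum_{k=0}^{\infty} P_{-v,-v}^k$ converges and has entirely nonnegative entries. Therefore $\mathbf{h} = (I - P_{-v,-v})^{-1}\mathbf{1}$ and $\max_u h(u) = \|\mathbf{h}\|_\infty = \|(I - P_{-v,-v})^{-1}\mathbf{1}\|_\infty$.

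Finally I would translate this $\infty$-norm on a vector into the induced $1$-norm of the transposed matrix. Write $M := (I - P_{-v,-v})^{-1}$ and recall $M$ has nonnegative entries. Then
\begin{equation*}
\|M\mathbf{1}\|_\infty = \max_u \sum_w M(u,w) = \max_w \sum_u M^T(w,u) = \|M^T\|_1,
\end{equation*}
where the last equality uses the fact that the operator $1$-norm equals the maximum absolute column sum, which for a nonnegative matrix equals the maximum column sum. Noting that $M^T = (I - P_{-v,-v}^T)^{-1} = (I - P^T_{-v,-v})^{-1}$ completes the identification and proves the claimed equality.

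The main obstacle, such as it is, is the invertibility of $I - P_{-v,-v}$; everything else is bookkeeping. This reduces to showing that $v$ is reachable from every other state with probability $1$, which follows from irreducibility of the underlying MDP under the random walk policy (the same standing assumption used in Theorem \ref{thm:laplacian}). All other steps are standard manipulations of nonnegative matrices and induced norms.
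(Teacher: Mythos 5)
Your proof is correct, but it reaches the key identity $\mathbf{h} = (I-P_{-v,-v})^{-1}\mathbf{1}$ by a different route than the paper. You use first-step analysis: write the hitting-time recursion, stack it into $(I-P_{-v,-v})\mathbf{h} = \mathbf{1}$, argue invertibility from irreducibility (spectral radius of the substochastic block strictly below one), and solve; you then convert $\max_u$ of a row sum of the nonnegative matrix $(I-P_{-v,-v})^{-1}$ into the induced $\ell_1$ norm of its transpose. The paper instead computes the expectation directly as a tail sum: with $X$ the first-visit time of $v$, it uses $\mathbb{E}[X] = \sum_{k\ge 0}\Pr(X > k)$ together with $\Pr(\text{no visit to } v \text{ in } k \text{ steps}) = \|e_u^T P_{-v,-v}^k\|_1$, sums the Neumann series to get $\|e_u^T(I-P_{-v,-v})^{-1}\|_1$, and finishes with the same norm identification you use. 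Your route has the advantage of making explicit the invertibility/convergence issue that the paper passes over silently (it equates $\sum_k P_{-v,-v}^k$ with $(I-P_{-v,-v})^{-1}$ without comment). On the other hand, the paper's tail-sum route never needs to know a priori that the expected hitting times are finite, whereas your linear-system argument technically does: if some $h(u)$ were infinite, you could not pass from the recursion to $\mathbf{h} = (I-P_{-v,-v})^{-1}\mathbf{1}$. That small gap is closed by the same spectral-radius observation you already make (e.g.\ $h(u) = \sum_k \bigl(P_{-v,-v}^k\mathbf{1}\bigr)(u) < \infty$ since the series converges, or by invoking minimality of nonnegative solutions of the hitting-time system), so it costs one added sentence rather than a change of approach.
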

\begin{proof}
Let $e_u$ be the one hot start state vector with only entry on $u$, and this is a $S-1$ dimension vector since we remove the state $v$. Let $X$ be the random variable of the time we first visit $v$, then $Y=X-1$ would be the last time of we stay in $\mathcal{S}\slash {v}$. The probability of not visiting $v$ within $k$ steps is $\| e_u^T P_{-v,-v}^k \|_1$, which means:
$$ Pr(Y \ge k) = \| e_u^T P_{-v,-v}^k \|_1$$
Thus we could compute the expectation of $X$ by:
\begin{eqnarray}
\mathbb{E}(X) &=& \sum_{k=1}^\infty Pr(X\ge k) = \sum_{k=0}^\infty Pr(Y\ge k) \\
&=& \sum_{k=0}^{\infty} \| e_u^T P_{-v,-v}^k \|_1 = \| e_u^T \sum_{k=0}^{\infty}P_{-v,-v}^k \|_1 \\
&=& \|e_u^T(I-P_{-v,-v})^{-1} \|_1
\end{eqnarray}
The second line is true since elements in $e_u^T P_{-v,-v}^k$ is non-negative for all $k$. Note that $\max_{u}\|e_u^T(I-P_{-v,-v})^{-1} \|_1$ is exactly the $l_1$ norm of matrix $(I-P^T_{-v,-v})^{-1}$
\end{proof}

Thus, to bound the covering length under $\pi$ by this, we only need to bound $||(I-P^T_{-v,-v})^{-1} ||_1$. By prove the equivalence factor between matrix norm by Holder's inequality, we have the following result. 
\begin{lemma}
\label{lem:normbound}
If $\inf_p ||P^T_{-v,-v} ||_p < 1$,
$$ ||(I-P^T_{-v,-v})^{-1} ||_1 \le \inf_{p \in \mathbb{N}}\frac{S^{(1-1/p)}}{1-||P^T_{-v,-v} ||_p} $$
\end{lemma}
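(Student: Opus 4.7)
The plan is to combine a Neumann series bound in the operator $p$-norm with a standard equivalence relating the induced $1$-norm and $p$-norm of a matrix, both of which come out of Hölder's inequality on vectors. Throughout, write $A = P^{T}_{-v,-v}$, which acts on $\mathbb{R}^{S-1}$, and interpret $\|\cdot\|_p$ as the operator norm induced by the vector $\ell^p$ norm.

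First, I would pick any $p \in \mathbb{N}$ for which $\|A\|_p < 1$; the hypothesis $\inf_p \|A\|_p < 1$ guarantees at least one such $p$ exists. Because the operator $p$-norm is submultiplicative, $\|A^k\|_p \le \|A\|_p^k$, so the Neumann series converges and
$$
\|(I-A)^{-1}\|_p \;=\; \Bigl\| \sum_{k=0}^{\infty} A^k \Bigr\|_p \;\le\; \sum_{k=0}^{\infty} \|A\|_p^k \;=\; \frac{1}{1-\|A\|_p}.
$$

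Next, I would show the norm equivalence $\|M\|_1 \le S^{1-1/p}\|M\|_p$ for any linear operator $M$ on $\mathbb{R}^{S-1}$. On the output side, Hölder applied to $y \in \mathbb{R}^{S-1}$ with conjugate exponent $q$ satisfying $1/p+1/q = 1$ gives $\|y\|_1 = \sum_i |y_i|\cdot 1 \le (S-1)^{1-1/p} \|y\|_p$. On the input side, if $\|x\|_1 = 1$ then each $|x_i|\le 1$, so $\|x\|_p^p = \sum_i |x_i|^p \le \sum_i |x_i| = 1$, giving $\|x\|_p \le 1$. Chaining: for any $x$ with $\|x\|_1 = 1$,
$$
\|Mx\|_1 \;\le\; (S-1)^{1-1/p}\|Mx\|_p \;\le\; (S-1)^{1-1/p}\|M\|_p\|x\|_p \;\le\; S^{1-1/p}\|M\|_p.
$$
Taking the supremum over such $x$ yields $\|M\|_1 \le S^{1-1/p}\|M\|_p$.

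Finally, apply this with $M = (I-A)^{-1}$ and combine with the Neumann bound to obtain $\|(I-A)^{-1}\|_1 \le S^{1-1/p}/(1-\|A\|_p)$; taking the infimum over $p \in \mathbb{N}$ (restricted to those $p$ for which $\|A\|_p < 1$, which is the only regime where the right-hand side is finite anyway) gives the claim. There is no real obstacle here: the only subtlety is keeping the direction of the Hölder inequalities straight and remembering that both the submultiplicativity used in the Neumann bound and the equivalence used to pass back to $\|\cdot\|_1$ require \emph{induced} (as opposed to entrywise) $p$-norms.
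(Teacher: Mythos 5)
Your proposal is correct and follows essentially the same route as the paper: a Neumann-series bound on $\|(I-A)^{-1}\|_p$ combined with the H\"older-based equivalence $\|M\|_1 \le S^{1-1/p}\|M\|_p$, then an infimum over $p$. If anything, your write-up is slightly more careful than the paper's (you start the Neumann series at $k=0$ and state explicitly that the infimum is effectively over those $p$ with $\|A\|_p<1$), but the underlying argument is identical.
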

\begin{proof}
For any $n$-by-$n$ matrix $A$ and $p \ge 1$:
$$ \|A\|_1 = \max_{x} \frac{\| Ax\|_1}{\| x\|_1} \le \max_{x} \frac{n^{1-1/p}\| Ax\|_p}{\| x\|_1} \le \max_{x} \frac{n^{1-1/p}\| Ax\|_p}{\| x\|_p} = n^{1-1/p}\| A\|_p $$
The firstly inequality follows from Holder's inequality, and the second one is simply from $\| x\|_1 \ge \| x\|_p $ for any $p\ge 1$. For any matrix induced $l_p$ norm,
\begin{equation} 
||(I-P^T_{-v,-v})^{-1} ||_p \le \sum_{k=1}^{\infty} \| P^k_{-v,-v} \|_p \le \sum_{k=1}^{\infty} \| P_{-v,-v} \|_p^k = \frac{1}{1-||P^T_{-v,-v} ||_p}
\end{equation}
Now combine these together, we have that:
\begin{equation}
||(I-P^T_{-v,-v})^{-1} ||_1 \le \inf_{p \ge 1} \left[ (S-1)^{(1-1/p)}||(I-P^T_{-v,-v})^{-1} ||_p \right] =  \inf_{p \ge 1}\frac{S^{(1-1/p)}}{1-||P^T_{-v,-v} ||_p} 
\end{equation}
\end{proof}

Note that the bound is finite only if the sub transition matrix of policy $\pi$ satisfies $\inf_p ||P^T_{-v,-v} ||_p < 1$. By repeating this enough times, as bounded in lemma \ref{lem:coveract}, we have the upper bound of steps for covering all actions in state $i$. Applying this to every state, we can get the upper bound of covering length for random walk, as the following theorem:

\begin{theorem}
\label{thm:submatrixlp}
Let $P$ be the transition matrix under random walk policy $\pi_{RW}$, and $P_{-v,-v}$ be the sub-matrix of $P$ except column and row corresponding to $v$. If for any state $v$, $\inf_p ||P^T_{-v,-v} ||_p < 1$. The covering length of this MDP under random walk is finite and bounded by:
$$ 4A\ln(4SA) \sum_{v \in \mathcal{S}} \inf_{p \ge 1}\frac{S^{(1-1/p)}}{1-||P^T_{-v,-v} ||_p}  $$
\end{theorem}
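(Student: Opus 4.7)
The plan is to chain together the three supporting lemmas in the same pattern that Corollary \ref{col:commutetime} uses, but with the expected one-way commute time replaced by the $\ell_p$ matrix-norm bound from Lemma \ref{lem:normbound}. Fix a target state $v$. Combining Lemma \ref{lem:commutebound} and Lemma \ref{lem:normbound} gives
\[
\max_{u}\mathbb{E}\bigl[\inf\{t:s_t=v\}\mid s_0=u,\pi_{RW}\bigr]\;\le\;B_v\;:=\;\inf_{p\ge 1}\frac{S^{(1-1/p)}}{1-\|P^T_{-v,-v}\|_p},
\]
which is finite by assumption. This is the whole content of the "structural" input; the rest of the proof is the standard hitting-time $\to$ cover-time machinery.

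Next I would convert the expectation bound into a high-probability hitting bound by Markov: starting from any state, the walker reaches $v$ within $2B_v$ steps with probability at least $1/2$. View this as a single Bernoulli trial of length $2B_v$; by the (strong) Markov property, successive trials (each initialized wherever the walker happens to be) are stochastically lower bounded by independent Bernoulli$(1/2)$ trials. To apply Lemma \ref{lem:coveract} at state $v$ I need $v$ to be visited at least $A\ln(4SA)$ times; using Lemma 56 of \citet{li2009unifying} (the same concentration bound used in Corollary \ref{col:commutetime}), this many successes occur within $O(A\ln(4SA)+\ln(4S))$ trials with probability at least $1-1/(4S)$. Choosing the constants as in Corollary \ref{col:commutetime}, the total number of steps needed to visit $v$ that many times is at most $4A\ln(4SA)\,B_v$; and then another $1/(4S)$ failure from Lemma \ref{lem:coveract} on top of that yields, via a union bound, that all actions at $v$ are sampled within $4A\ln(4SA)\,B_v$ steps with probability at least $1-1/(2S)$.

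Finally I would union-bound over all $v\in\mathcal{S}$: the total failure probability is at most $S\cdot\frac{1}{2S}=\frac{1}{2}$, so with probability at least $1/2$ every state-action pair is visited within
\[
\sum_{v\in\mathcal{S}} 4A\ln(4SA)\,B_v \;=\; 4A\ln(4SA)\sum_{v\in\mathcal{S}}\inf_{p\ge 1}\frac{S^{(1-1/p)}}{1-\|P^T_{-v,-v}\|_p}
\]
steps, which is exactly the claimed covering-length bound. The main obstacle is the stochastic-dominance step: the "trials" are not literally independent since each new window begins at the last state of the previous window, not at a fresh draw. I would handle this by noting that the Markov bound gives a success probability of at least $1/2$ uniformly over the starting state of every window, so the success indicators dominate an i.i.d.\ Bernoulli$(1/2)$ sequence and the cited concentration lemma applies verbatim. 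Everything else is bookkeeping of constants and the union bound.
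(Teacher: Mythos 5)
Your overall skeleton is the right one, and it matches what the paper (which gives only a one-sentence assembly for this theorem) intends: Lemma \ref{lem:commutebound} plus Lemma \ref{lem:normbound} bound the worst-case expected hitting time of each $v$ by $B_v=\inf_{p\ge 1}S^{1-1/p}/(1-\|P^T_{-v,-v}\|_p)$, and then the repetition-plus-Lemma \ref{lem:coveract} machinery from the proof of Theorem \ref{thm:laplacian} is recycled with a union bound over states. Your handling of the dependence between successive windows is also fine, since the success probability is uniform over the starting state, exactly as in Corollary \ref{col:commutetime}.

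The gap is quantitative but real: your constants do not produce the stated bound. With windows of length $2B_v$ and success probability $1/2$ from Markov's inequality, Lemma 56 of \citet{li2009unifying} (as the paper uses it) requires about $4\bigl(A\ln(4SA)+\ln(4S)\bigr)$ windows to guarantee $A\ln(4SA)$ visits with probability $1-\frac{1}{4S}$, i.e.\ at least $8\bigl(A\ln(4SA)+\ln(4S)\bigr)B_v$ steps per state --- roughly $16A\ln(4SA)B_v$ with the paper's rounding --- so your claim that ``the total number of steps needed to visit $v$ that many times is at most $4A\ln(4SA)\,B_v$'' does not follow; optimizing the Markov window length still leaves you a factor of at least two above the theorem's constant. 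The clean way to get (essentially) the stated constant is to drop the per-state high-probability argument entirely: since Lemma \ref{lem:commutebound} bounds the expected hitting time from \emph{every} start state, the expected time to collect $A\ln(4SA)$ visits to each state, processed sequentially, is at most about $A\ln(4SA)\sum_v B_v$ with no concentration lemma needed; a single application of Markov's inequality at level $1/4$ to this total, union-bounded with the $S\cdot\frac{1}{4S}$ failure probability from Lemma \ref{lem:coveract}, gives total failure at most $1/2$ and the bound $4A\ln(4SA)\sum_v B_v$. As written, your argument proves the theorem only up to a constant factor worse than the one claimed, so either adopt the global-Markov accounting or state the weaker constant explicitly.
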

\noindent\textbf{Remark}: The assumption $\inf_p ||P^T_{-v,-v} ||_p < 1$ is more likely to be true when the transition matrix $P$ is more dense. The following corollary will give us a intuition about this. If we only consider the case $p=1$ it will be reduced to a trivial bound:
\begin{corollary}
If the minimum one step transition probability between two different states under $\pi_{RW}$ is $p_{min} > 0$, then the covering length is bounded by 
$\frac{ 4SA\ln(4SA) }{p_{min}}$
\end{corollary}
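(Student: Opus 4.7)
The plan is to derive this corollary as a direct specialization of Theorem~\ref{thm:submatrixlp} to the case $p=1$, using the assumed uniform lower bound on off-diagonal transition probabilities to control the induced $l_1$ norm of each submatrix $P^T_{-v,-v}$.

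First I would recall that Theorem~\ref{thm:submatrixlp} gives the covering-length upper bound
$$4A\ln(4SA)\sum_{v\in\mathcal{S}}\inf_{p\ge 1}\frac{S^{(1-1/p)}}{1-\|P^T_{-v,-v}\|_p},$$
so it suffices to produce, for every $v$, a single admissible choice of $p$ for which the summand is at most $1/p_{\min}$. The natural candidate is $p=1$, since then the prefactor becomes $S^{0}=1$ and we only need to bound $\|P^T_{-v,-v}\|_1$. Recalling that the induced $l_1$ norm of a matrix is its maximum column sum, and that transposing swaps row and column sums, we have $\|P^T_{-v,-v}\|_1 = \max_{u\neq v}\sum_{u'\neq v} P(u,u')$.

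Next I would use the row-stochasticity of the random-walk transition matrix $P$ to rewrite $\sum_{u'\neq v} P(u,u') = 1 - P(u,v)$. By the hypothesis that every off-diagonal one-step transition probability is at least $p_{\min}$, we get $P(u,v)\ge p_{\min}$ for every $u\neq v$, and therefore $\|P^T_{-v,-v}\|_1 \le 1-p_{\min}$. Equivalently, $1-\|P^T_{-v,-v}\|_1 \ge p_{\min}>0$, which in particular verifies the finiteness hypothesis $\inf_p\|P^T_{-v,-v}\|_p<1$ required by Theorem~\ref{thm:submatrixlp}.

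Finally I would plug these bounds back into the theorem. For each state $v$ the summand is at most $1/p_{\min}$, and summing over the $S$ states of $\mathcal{S}$ yields the claimed bound
$$4A\ln(4SA)\cdot S\cdot\frac{1}{p_{\min}} = \frac{4SA\ln(4SA)}{p_{\min}}.$$
There is no real obstacle here: the only subtlety is keeping the transpose straight when applying the ``$l_1$ induced norm equals max column sum'' identity, which is why I would state explicitly that column sums of $P^T_{-v,-v}$ are row sums of $P_{-v,-v}$ before invoking row stochasticity. The entire argument is a few lines once Theorem~\ref{thm:submatrixlp} is in hand.
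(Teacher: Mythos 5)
Your proposal is correct and follows essentially the same route as the paper: both specialize Theorem~\ref{thm:submatrixlp} to $p=1$ and use row-stochasticity to identify $1-\|P^T_{-v,-v}\|_1$ with the minimum off-diagonal transition probability into $v$. Your version is in fact slightly more careful than the paper's one-line argument, since you correctly state the bound as an inequality $1-\|P^T_{-v,-v}\|_1 \ge p_{\min}$ rather than the equality asserted in the paper, and you explicitly verify the finiteness hypothesis of the theorem.
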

\begin{proof}
This corollary immediately follows from the case $p=1$ in theorem above, and the fact that $ 1-||P^T_{-v,-v} ||_1 = p_{min}$.
\end{proof}

\end{document}